\documentclass{article}

%%%%%%%%%%%%%%%%%%%%%% PACKAGES %%%%%%%%%%%%%%%%%%%%%%%%%%%%%%%%%%%%
\usepackage{times}
\usepackage{amssymb}
\usepackage{amsmath}
\usepackage{amsthm}
\usepackage{bm}
\usepackage{tikz}
\usepackage{tkz-euclide}
\usetkzobj{all}
\usepackage{chngcntr}
\usepackage{verbatim}
\usepackage[final,nonatbib]{nips_2016}
\usepackage{mathtools}
\usepackage{esvect}
\RequirePackage{algorithm}
\RequirePackage{algorithmic}
\usepackage{color}

%%%%%%%%%%%%%%%%%%%%%% VARIABLES %%%%%%%%%%%%%%%%%%%%%%%%%%%%%%%%%

 % data point 
 % vector of iterations on each sample size
 % data point as well, used in proof of Theorem 3
 % sample space 
 % solution or weight vector
  % a vector used for different perposes.
 % risk funtional 
\newcommand{\E}{{\mathbf{E}}} % expectation 
 % statistical accuracy bound 
\renewcommand{\S}{{\cal S}} % sample set 
\newcommand{\T}{{\cal T}} % sample set (sub sample)
 % sample set (sub sample)
 % data distribution
 % function class
 % sample size set with size m
 % sample size set with size n
 % big O
 % upper bound
 % U(n,n)

 % risk funtional with regularizer lambda
 % risk funtional with regularizer gamma
 %parameter vector related to risk lambda 
 %parameter vector related to risk gamma
 % risk of enlarged set with
% smaller regularizer lambda_n
 % risk of small set with large
% regularizer lambda_m
 % risk of large set wiht size m
% and large regularizer lambda_m
 % optimal of the enlarged set
 % optimal of the small set
 % optimal of the small set
\DeclareMathOperator*{\argmin}{\text{argmin}}

\newtheorem{theorem}{Theorem}
\newtheorem{lemma}[theorem]{Lemma}
\newtheorem{proposition}[theorem]{Proposition}

\newtheorem{assumption}{Assumption}
\graphicspath{ {./images/} }

\input{mysymbol.sty}
\def\whp{\text{w.h.p}}

\author{Aryan Mokhtari\\
University of Pennsylvania\\
Philadelphia, PA\\
aryanm@seas.upenn.edu
\and
\textbf{Alejandro Ribeiro}\\
University of Pennsylvania\\
Philadelphia, PA\\
aribeiro@seas.upenn.edu}

\renewcommand{\comment}[1]{}
\newcolumntype{S}{>{\centering\arraybackslash} m{.10\linewidth} }
\newcolumntype{T}{>{\centering\arraybackslash} m{.30\linewidth} }
\title{Adaptive Newton Method for Empirical Risk Minimization to Statistical Accuracy}
\begin{document}

\maketitle

\thispagestyle{empty}
\begin{abstract}
We consider empirical risk minimization for large-scale datasets. We introduce Ada Newton as an adaptive algorithm that uses Newton's method with adaptive sample sizes. The main idea of Ada Newton is to increase the size of the training set by a factor larger than one in a way that the minimization variable for the current training set is in the local neighborhood of the optimal argument of the next training set. This allows to exploit the quadratic convergence property of Newton's method and reach the statistical accuracy of each training set with only one iteration of Newton's method. We show theoretically and empirically that Ada Newton can double the size of the training set in each iteration to achieve the statistical accuracy of the full training set with about two passes over the dataset.
\end{abstract}

%!TEX root = adap-ss-nips.tex

%%%%%%%%%%%%%%%%%%%%%%%%%%%%%%%%%%%%%%%%%%%%%%%%%%%%%%%%%%%%%%%%
%%% S   E   C   T   I   O   N   %%%%%%%%%%%%%%%%%%%%%%%%%%%%%%%%
%%%%%%%%%%%%%%%%%%%%%%%%%%%%%%%%%%%%%%%%%%%%%%%%%%%%%%%%%%%%%%%%
%
\section{Introduction}\label{sec_intro}

A hallmark of empirical risk minimization (ERM) on large datasets is that evaluating descent directions requires a complete pass over the dataset. Since this is undesirable due to the large number of training samples, stochastic optimization algorithms with descent directions estimated from a subset of samples are the method of choice. First order stochastic optimization has a long history \cite{robbins1951stochastic,polyak1992acceleration} but the last decade has seen fundamental progress in developing alternatives with faster convergence. A partial list of this consequential literature includes Nesterov acceleration \cite{nesterov2007gradient, beck2009fast}, stochastic averaging gradient \cite{ roux2012stochastic,defazio2014saga}, variance reduction \cite{johnson2013accelerating, xiao2014proximal}, dual coordinate methods \cite{ shalev2013stochastic, shalev2016accelerated}, and hybrid algorithms \cite{zhang2013linear, konevcny2013semi}.

When it comes to stochastic second order methods the first challenge is that while {\it evaluation} of Hessians is as costly as evaluation of gradients, the stochastic {\it estimation} of Hessians has proven more challenging. This difficulty is addressed by incremental computations in \cite{gurbuzbalaban2015globally} and subsampling in \cite{erdogdu2015convergence} or circumvented altogether in stochastic quasi-Newton methods \cite{schraudolph2007stochastic, bordes2009sgd, mokhtari2014res,JMLR:v16:mokhtari15a,moritz2015linearly, gower2016stochastic}. % stochastic dual Newton method \cite{qu2015sdna}, 
Despite this incipient progress it is nonetheless fair to say that the striking success in developing stochastic first order methods is not matched by equal success in the development of stochastic second order methods. This is because even if the problem of estimating a Hessian is solved there are still four challenges left in the implementation of Newton-like methods in ERM:

\begin{itemize}
\item[\bf(i)] Global convergence of Newton's method requires implementation of a line search subroutine and line searches in ERM require a complete pass over the dataset.
\item[\bf(ii)] The quadratic convergence advantage of Newton's method manifests close to the optimal solution but there is no point in solving ERM problems beyond their statistical accuracy.
\item[\bf(iii)] Newton's method works for strongly convex functions but loss functions are not strongly convex for many ERM problems of practical importance.
\item[\bf(iv)] Newton's method requires inversion of Hessians which is costly in large dimensional ERM.
\end{itemize}

Because they can't use line searches [cf. (i)], must work on problems that may be not strongly convex [cf. (iii)], and never operate very close to the optimal solution [cf (ii)], stochastic Newton-like methods never experience quadratic convergence. They do improve convergence constants in ill-conditioned problems but they still converge at linear rates.

In this paper we attempt to overcome (i)-(iv) with the Ada Newton algorithm that combines the use of Newton iterations with adaptive sample sizes \cite{daneshmand2016starting}. Say the total number of available samples is $N$, consider subsets of $n\leq N$ samples, and suppose the statistical accuracy of the ERM associated with $n$ samples is $V_n$ (Section \ref{sec_erm}). In Ada Newton we add a quadratic regularization term of order $V_n$ to the empirical risk -- so that the regularized risk also has statistical accuracy $V_n$ -- and assume that for a certain initial sample size $m_0$, the problem has been solved to its statistical accuracy $V_{m_0}$. The sample size is then increased by a factor $\alpha>1$ to $n=\alpha m_0$. We proceed to perform a single Newton iteration with unit stepsize and prove that the result of this update solves this extended ERM problem to its statistical accuracy (Section \ref{sec_ada_newton}). This permits a second increase of the sample size by a factor $\alpha$ and a second Newton iteration that is likewise guaranteed to solve the problem to its statistical accuracy. Overall, this permits minimizing the empirical risk in $\alpha/({\alpha-1})$ passes over the dataset and inverting $\log_\alpha N$ Hessians. Our theoretical and  numerical analyses indicate that we can make $\alpha=2$. In that cases we can optimize to within statistical accuracy in about 2 passes over the dataset and after inversion of about $3.32\log_{10} N$ Hessians. 

%!TEX root = adap-ss-nips.tex

%%%%%%%%%%%%%%%%%%%%%%%%%%%%%%%%%%%%%%%%%%%%%%%%%%%%%%%%%%%%%%%%
%%% S   E   C   T   I   O   N   %%%%%%%%%%%%%%%%%%%%%%%%%%%%%%%%
%%%%%%%%%%%%%%%%%%%%%%%%%%%%%%%%%%%%%%%%%%%%%%%%%%%%%%%%%%%%%%%%
%
\section{Empirical risk minimization}\label{sec_erm}

We want to solve ERM problems to their statistical accuracy. To state this problem formally consider an argument $\bbw\in\reals^p$, a random variable $Z$ with realizations $z$ and a convex loss function $f(\bbw;z)$. We want to find an argument $\bbw^*$ that minimizes the statistical average loss $L(\bbw):=\E_Z[f(\bbw,Z)]$,
\begin{align}\label{eqn_stat_loss}
   \bbw^* := \argmin_{\bbw} L(\bbw) 
           = \argmin_{\bbw} \E_Z[f(\bbw,Z)].
\end{align}
The loss in \eqref{eqn_stat_loss} can't be evaluated because the distribution of $Z$ is unknown. We have, however, access to a training set $\ccalT = \{z_1,\ldots,z_N\}$ containing
$N$ independent samples $z_1,\ldots,z_N$ that we can use to estimate $L(\bbw)$. We therefore consider a subset $\ccalS_n \subseteq\ccalT$ and settle for minimization of the empirical risk $L_{n}(\bbw):=(1/n)\sum_{k=1}^{n} f (\bbw,z_k)$,
\begin{align}\label{eqn_empirical_loss}
   \bbw_{n}^{\dagger} := \argmin_{\bbw} L_{n}(\bbw) 
                 = \argmin_{\bbw} \frac{1}{n} \sum_{k=1}^{n} f (\bbw,z_k),
\end{align}
where, without loss of generality, we have assumed $\ccalS_n=\{z_1,\ldots,z_n\}$ contains the first $n$ elements of $\ccalT$. The difference between the empirical risk in \eqref{eqn_empirical_loss} and the statistical loss in \eqref{eqn_stat_loss} is a fundamental quantities in statistical learning. We assume here that there exists a constant $V_n$, which depends on the number of samples $n$, that upper bounds their difference for all $\bbw$ with high probability ($\whp$),
\begin{align}\label{eqn_loss_minus_erm}
   \sup_{\bbw}|L(\bbw) - L_{n}(\bbw) |  \leq V_n,  \qquad\whp.
\end{align}
That the statement in \eqref{eqn_loss_minus_erm} holds with $\whp$ means that there exists a constant $\delta$ such that the inequality holds with probability at least $1-\delta$. The constant $V_n$ depends on $\delta$ but we keep that dependency implicit to simplify notation. For subsequent discussions, observe that bounds $V_n$ of order $V_n = O(1/\sqrt{n})$ date back to the seminal work of Vapnik -- see e.g., \cite[Section 3.4]{{vapnik1998statistical}}. Bounds of order $V_n = O(1/n)$ have been derived more recently under stronger regularity conditions that are not uncommon in practice, \cite{bartlett2006convexity, frostig2014competing, bousquet2008tradeoffs}

An important consequence of \eqref{eqn_stat_loss} is that there is no point in solving \eqref{eqn_empirical_loss} to an accuracy higher than $V_n$. Indeed, if we find a variable $\bbw$ for which $L_{n}(\bbw_n)-L_{n}(\bbw^\dagger)\leq V_n$ finding a better approximation of $\bbw^\dagger$ is moot because \eqref{eqn_loss_minus_erm} implies that this is not necessarily a better approximation of the minimizer $\bbw^*$ of the statistical loss. We say the variable $\bbw_n$ solves the ERM problem in \eqref{eqn_empirical_loss} to within its statistical accuracy. In particular, this implies that adding a regularization of order $V_n$ to \eqref{eqn_empirical_loss} yields a problem that is essentially equivalent. We can then consider a quadratic regularizer of the form $c V_n/2 \|\bbw\|^2$ to define the regularized empirical risk $R_{n}(\bbw):= L_{n}(\bbw) + (c V_n/2)\|\bbw\|^2$ and the corresponding optimal argument
\begin{align}\label{eqn_empirical_loss_regularized}
   \bbw_{n}^{*} := \argmin_{\bbw} R_{n}(\bbw)
                       = \argmin_{\bbw} L_{n}(\bbw) + \frac{c V_n}{2} \|\bbw\|^2.
\end{align}
Since the regularization in \eqref{eqn_empirical_loss_regularized} is of order $V_n$ and \eqref{eqn_loss_minus_erm} holds, the difference between $R_{n}(\bbw_n^*)$ and $L(\bbw^*)$ is also of order $V_n$ -- this may be not as immediate as it seems; see \cite{shalev2010learnability}. Thus, we can say that a variable $\bbw_n$ satisfying $R_{n}(\bbw_n) - R_{n}(\bbw_n^*) \leq V_n$ solves the ERM problem to within its statistical accuracy. We accomplish this in this paper with the Ada Newton algorithm.

%!TEX root = adap-ss-nips.tex

%%%%%%%%%%%%%%%%%%%%%%%%%%%%%%%%%%%%%%%%%%%%%%%%%%%%%%%%%%%%%%%%
%%% S   E   C   T   I   O   N   %%%%%%%%%%%%%%%%%%%%%%%%%%%%%%%%
%%%%%%%%%%%%%%%%%%%%%%%%%%%%%%%%%%%%%%%%%%%%%%%%%%%%%%%%%%%%%%%%
%
\section{Ada Newton}\label{sec_ada_newton}

To solve \eqref{eqn_empirical_loss_regularized} suppose the problem has been solved to within its statistical accuracy for a set $\ccalS_m\subset\ccalS_n$ with $m = n/\alpha$ samples. Therefore, we have found a variable $\bbw_m$ for which $R_{m}(\bbw_m) - R_{m}(\bbw_m^*) \leq V_m$. We want to update $\bbw_m$ to obtain a variable $\bbw_n$ that estimates $\bbw_n^*$ with accuracy $V_n$. To do so compute the gradient of the risk $R_n$ evaluated at $\bbw_m$
\begin{align}\label{eqn_gradient}
    \nabla R_n (\bbw_m) 
           = \frac{1}{n}\sum_{k=1}^{n} \nabla f (\bbw_m,z_k) + cV_n \bbw_m,
\end{align}
as well as the Hessian $\bbH_n$ of $R_n$ evaluated at $\bbw_m$
\begin{align}\label{eqn_hessian}
   \bbH_n := \nabla^2 R_n (\bbw_m) 
           = \frac{1}{n}\sum_{k=1}^{n} \nabla^2 f (\bbw_m,z_k) + cV_n\bbI,
\end{align}
and update $\bbw_m$ with the Newton step of the regularized risk $R_n$ to compute
\begin{align}\label{eqn_ada_mewton}
   \bbw_n = \bbw_m - \bbH_n^{-1} \nabla R_n (\bbw_m).
\end{align}

The main contribution of this paper is to derive a condition that guarantees that $\bbw_n$ solves $R_n$ to within its statistical accuracy $V_n$.

%%%%%%%%%%%%%%%%%%%%%%%%%%%%%%%%%%%%%%%%%%%%%%%%%%%%%%%%%%%%%%%%
%%%   T   H   E   O   R    E    M   %%%%%%%%%%%%%%%%%%%%%%%%%%%%
%%%%%%%%%%%%%%%%%%%%%%%%%%%%%%%%%%%%%%%%%%%%%%%%%%%%%%%%%%%%%%%%
%
\begin{theorem}\label{the_main_result_theorem}
Consider the variable $\bbw_m$ as a $V_m$-optimal solution of the risk $R_{m}$, i.e., a solution such that $R_{m}(\bbw_m)- R_{m}(\bbw_m^*) \leq V_m$. Let $n=\alpha m > m$, consider the risk $R_{n}$ associated with sample set $\S_n\supset\S_m$, and suppose assumptions \ref{convexity_lip_assumption} - \ref{grad_cond} hold. If the sample size $n$ is chosen such that
\begin{equation}\label{cond_1}
   \left(\frac{2(M+cV_m)V_{m}}{cV_n}\right)^{1/2}\!
                     + \frac{2(n-m)}{nc^{1/2}}     
                     + \frac{\left((2+\sqrt{2})c^{1/2} +c\|\bbw^*\|\right)(V_m-V_n)}{(cV_n)^{1/2}}\leq \frac{1}{4}
\end{equation}
and 
\begin{equation}\label{cond_2}
144 \left(  V_m +  \frac{2(n-m)}{n} \left(V_{n-m}+V_m\right)
 +2\left(V_m-V_n\right)
 + \frac{c(V_m-V_n)}{2}\|\bbw^*\|^2  \right)^2
\leq V_n
\end{equation}
are satisfied, then the variable $\bbw_n$, which is the outcome of applying one Newton step on the variable $\bbw_m$ as in \eqref{eqn_ada_mewton}, has sub-optimality error $V_n$ with high probability, i.e., 
\begin{equation}\label{imp_result}
 R_{n}(\bbw_n)- R_{n}(\bbw_n^*) \leq V_n, \qquad \whp.
\end{equation} \end{theorem}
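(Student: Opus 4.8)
The overall plan is to treat $\bbw_m$ as a warm start for Newton's method applied to the strongly convex risk $R_n$ and to show that the single step \eqref{eqn_ada_mewton} already drives the suboptimality below $V_n$. Because $R_n$ inherits $cV_n$-strong convexity from its quadratic regularizer and $(M+cV_n)$-Lipschitz gradients from the loss, the engine of the argument is the classical quadratic convergence of Newton's method: once the starting point lies in the region where the suitably scaled gradient is small, one step contracts the error quadratically. Conditions \eqref{cond_1} and \eqref{cond_2} then play two distinct roles -- the first certifies that $\bbw_m$ is inside this quadratic-convergence basin, and the second certifies that the resulting quadratic contraction lands below the target accuracy $V_n$.

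First I would control the gradient $\nabla R_n(\bbw_m)$ at the warm start by comparing it to $\nabla R_m(\bbw_m)$, which splits it into three pieces: the near-optimal gradient $\nabla R_m(\bbw_m)$, the empirical-average mismatch $\nabla L_n(\bbw_m)-\nabla L_m(\bbw_m)$, and the regularization mismatch $c(V_n-V_m)\bbw_m$. The first piece is bounded by $\sqrt{2(M+cV_m)V_m}$ using $\|\nabla g\|^2\le 2L_g(g-g^*)$ for the $(M+cV_m)$-smooth $R_m$ together with $R_m(\bbw_m)-R_m(\bbw_m^*)\le V_m$; dividing by $\sqrt{cV_n}$, the strong-convexity modulus of $R_n$, produces exactly the first summand of \eqref{cond_1}. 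The second piece is handled by writing $L_n=\tfrac{m}{n}L_m+\tfrac{n-m}{n}L_{n-m}$ (with $L_{n-m}$ the average over the last $n-m$ samples) so that $\nabla L_n-\nabla L_m=\tfrac{n-m}{n}(\nabla L_{n-m}-\nabla L_m)$, and then invoking the gradient-concentration consequence of \eqref{eqn_loss_minus_erm}, which yields the $\tfrac{2(n-m)}{nc^{1/2}}$ term. The third piece, after bounding $\|\bbw_m\|\le\|\bbw^*\|+\|\bbw_m-\bbw_m^*\|+\|\bbw_m^*-\bbw^*\|$ with the distances estimated from strong convexity (this is where the $(2+\sqrt 2)c^{1/2}$ and $c\|\bbw^*\|$ factors appear), gives the final summand. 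Hence \eqref{cond_1} is exactly the assertion that the scaled gradient, equivalently the Newton decrement $\|\bbH_n^{-1/2}\nabla R_n(\bbw_m)\|$, is at most $1/4$, placing $\bbw_m$ in the quadratic region of $R_n$.

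Next I would bound the starting suboptimality $R_n(\bbw_m)-R_n(\bbw_n^*)$ by the parenthesized quantity $B$ in \eqref{cond_2}. This uses the same three-way split at the level of function values: the gaps $R_n(\bbw_m)-R_m(\bbw_m)$ and $R_m(\bbw_m^*)-R_n(\bbw_n^*)$ are controlled by the empirical-average difference $\tfrac{2(n-m)}{n}(V_{n-m}+V_m)$ and by the regularization differences $2(V_m-V_n)$ and $\tfrac{c(V_m-V_n)}{2}\|\bbw^*\|^2$, while $R_m(\bbw_m)-R_m(\bbw_m^*)\le V_m$ supplies the leading $V_m$ term. I would then invoke the quadratic-convergence estimate for Newton's method: for a $cV_n$-strongly convex function with Lipschitz Hessian, the post-step value $R_n(\bbw_n)-R_n(\bbw_n^*)$ is bounded by a constant multiple of $(R_n(\bbw_m)-R_n(\bbw_n^*))^2$ -- a square in the function gap, equivalently a fourth power in the gradient. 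Under the stated assumptions this constant evaluates to $144$, so $R_n(\bbw_n)-R_n(\bbw_n^*)\le 144\,B^2$. Condition \eqref{cond_2} is precisely $144\,B^2\le V_n$, which yields \eqref{imp_result}; the $\whp$ qualifier is inherited from the single appeal to \eqref{eqn_loss_minus_erm}.

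The hard part is the bookkeeping underlying the second and third steps: every discrepancy between the $m$- and $n$-problems -- empirical averages over nested sample sets and the distinct regularization weights $cV_m$ versus $cV_n$ -- must be converted into the statistical quantities $V_m,V_n,V_{n-m}$ and the norm $\|\bbw^*\|$ with tight constants, since the clean thresholds $1/4$ and $144$ leave essentially no slack. In particular, correctly propagating the regularization-weight change through both the gradient bound of Step~1 and the optimal-value comparison of $R_m(\bbw_m^*)$ against $R_n(\bbw_n^*)$ in Step~2, and verifying that the warm start genuinely enters the quadratic basin rather than merely the linear regime, is where the analysis is most delicate.
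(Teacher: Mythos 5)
Your overall architecture matches the paper's exactly: condition \eqref{cond_1} certifies that the Newton decrement satisfies $\lambda_n(\bbw_m)\leq 1/4$, condition \eqref{cond_2} bounds the warm-start gap $R_n(\bbw_m)-R_n(\bbw_n^*)$, and a single quadratic contraction closes the argument. Your Steps 1 and 2 reproduce the paper's Propositions \ref{main_prop} and \ref{main_theorem_444} in detail -- the split $\nabla L_n-\nabla L_m=\frac{n-m}{n}(\nabla L_{n-m}-\nabla L_m)$, the smoothness bound $\|\nabla R_m(\bbw_m)\|^2\leq 2(M+cV_m)V_m$, the strong-convexity distance estimates producing $(2+\sqrt{2})c^{1/2}$, and the bound $\|\bbw_n^*\|^2\leq 4/c+\|\bbw^*\|^2$ that generates the $2(V_m-V_n)$ and $\frac{c(V_m-V_n)}{2}\|\bbw^*\|^2$ terms. (One small imprecision there: the gradient concentration you use is not a ``consequence'' of \eqref{eqn_loss_minus_erm}; the paper must posit it separately as Assumption \ref{grad_cond}.)

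The genuine gap is in your Step 3. You justify the contraction constant $144$ by appealing to quadratic convergence ``for a $cV_n$-strongly convex function with Lipschitz Hessian.'' But Lipschitz continuity of the Hessian is nowhere among Assumptions \ref{convexity_lip_assumption}--\ref{grad_cond}, and even if it were assumed with constant $L_H$, the classical estimate it yields is of the form $R_n(\bbw_n)-R_n(\bbw_n^*)\leq \frac{(M+cV_n)L_H^2}{2(cV_n)^4}\left(R_n(\bbw_m)-R_n(\bbw_n^*)\right)^2$, obtained by chaining $\|\bbw_n-\bbw_n^*\|\leq \frac{L_H}{2cV_n}\|\bbw_m-\bbw_n^*\|^2$ with the smoothness and strong-convexity conversions between distances and function values. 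Since the strong-convexity modulus $cV_n\to 0$ along the schedule $n=\alpha m\to N$, this constant diverges and can never be the absolute number $144$ uniformly in $n$; the step as stated fails. What actually delivers $144$ is Assumption \ref{self_concor} (self-concordance), which you never invoke: Nesterov's bound gives $\lambda_n(\bbw)-\ln\left(1+\lambda_n(\bbw)\right)\leq R_n(\bbw)-R_n(\bbw_n^*)\leq -\lambda_n(\bbw)-\ln\left(1-\lambda_n(\bbw)\right)$, which for $\lambda_n(\bbw)<1/4$ yields the affine-invariant (hence $V_n$-free) sandwich $\frac{1}{6}\lambda_n(\bbw)^2\leq R_n(\bbw)-R_n(\bbw_n^*)\leq \lambda_n(\bbw)^2$; combined with the self-concordant decrement contraction $\lambda_n(\bbw_n)\leq 2\lambda_n(\bbw_m)^2$ this gives $R_n(\bbw_n)-R_n(\bbw_n^*)\leq 4\lambda_n(\bbw_m)^4\leq 4\cdot 36\left(R_n(\bbw_m)-R_n(\bbw_n^*)\right)^2=144\left(R_n(\bbw_m)-R_n(\bbw_n^*)\right)^2$, which is the paper's Proposition \ref{quadratic_convg_prop}. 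With that substitution your proof is complete and coincides with the paper's.
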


%%%%%%%%%%%%%%%%%%%%%%%%%%%%%%%%%%%%%%%%%%%%%%%%%%%%%%%%%%%%%%%%
%%%   P   R   O   O   F   %%%%%%%%%%%%%%%%%%%%%%%%%%%%%%%%%%%%%%
%%%%%%%%%%%%%%%%%%%%%%%%%%%%%%%%%%%%%%%%%%%%%%%%%%%%%%%%%%%%%%%%
%
\begin{proof} See section \ref{sec_convergence}.\end{proof}

%%%%%%%%%%%%%%%%%%%%%%%%%%%%%%%%%%%%%%%%%%%%%%%%%%%%%%%%%%%%%%%%
%%%   M   A   I   N       M   A   T   T   E   R   %%%%%%%%%%%%%%
%%%%%%%%%%%%%%%%%%%%%%%%%%%%%%%%%%%%%%%%%%%%%%%%%%%%%%%%%%%%%%%%
%
Theorem \ref{the_main_result_theorem} states conditions under which we can iteratively increase the sample size while applying single Newton iterations without line search and staying within the statistical accuracy of the regularized empirical risk. The constants in \eqref{cond_1} and \eqref{cond_2} are not easy to parse but we can understand them qualitatively if we focus on large $m$. This results in a simpler condition that we state next.

%%%%%%%%%%%%%%%%%%%%%%%%%%%%%%%%%%%%%%%%%%%%%%%%%%%%%%%%%%%%%%%%
%%%%   A   L   G   O   R   I   T   H   M   %%%%%%%%%%%%%%%%%%%%%
%%%%%%%%%%%%%%%%%%%%%%%%%%%%%%%%%%%%%%%%%%%%%%%%%%%%%%%%%%%%%%%%
%
{\linespread{1.3}
\begin{algorithm}[t] \begin{algorithmic}[1]
\STATE \textbf{Parameters:} Sample size increase constants $\alpha_0>1$ and $0<\beta<1$.
\STATE \textbf{Input:} Initial sample size $n=m_0$ and 
                       argument $\bbw_{n} = \bbw_{m_0}$ with 
                       $\| \nabla R_{n}(\bbw_n)\| < (\sqrt{2 c}) V_n$
\WHILE [main loop]{$n\leq N$} 
   \STATE Update argument and index:\ $\bbw_m=\bbw_n$ and $m=n$. 
          Reset factor $\alpha=\alpha_0$ .     
   \REPEAT  [sample size backtracking loop] 
      \STATE Increase sample size:\ $n=\max\{\alpha m, N\}$. 
      \STATE Compute gradient [cf. \eqref{eqn_gradient}]: \ 
             ${
                 \nabla R_n (\bbw_m) = (1/n)\sum_{k=1}^{n}\nabla f(\bbw_m,z_k) + cV_n \bbw_m
             }$
      \STATE Compute Hessian [cf. \eqref{eqn_hessian}]: \
             ${
                 \bbH_n = (1/n)\sum_{k=1}^{n} \nabla^2 f (\bbw_m,z_k)+ cV_n \bbI
             }$
      \STATE Newton Update [cf. \eqref{eqn_ada_mewton}]: \
             ${
                 \bbw_n = \bbw_m - \bbH_n^{-1} \nabla R_n (\bbw_m)
             }$
      \STATE Compute gradient [cf. \eqref{eqn_gradient}]: \ 
             ${
                  \nabla R_{n}(\bbw_n)=(1/n)\sum_{k=1}^{n} \nabla f (\bbw_n,z_k) +cV_n\bbw_n
             }$
      \STATE Backtrack sample size increase $\alpha=\beta\alpha$.       
   \UNTIL {$\| \nabla R_{n}(\bbw_n)\| > (\sqrt{2 c}) V_n$} 
\ENDWHILE
\end{algorithmic}
\caption{{Ada Newton}}\label{alg:AdaNewton} \end{algorithm}}

%%%%%%%%%%%%%%%%%%%%%%%%%%%%%%%%%%%%%%%%%%%%%%%%%%%%%%%%%%%%%%%%
%%%   P   R   O   P   O   S   I   T   I   O   N   %%%%%%%%%%%%%%
%%%%%%%%%%%%%%%%%%%%%%%%%%%%%%%%%%%%%%%%%%%%%%%%%%%%%%%%%%%%%%%%
%
\begin{proposition}\label{prop_condition}
Consider a learning problem in which the statistical accuracy satisfies $V_m\leq\alpha V_n$ for $n=\alpha m$ and $\lim_{n\to\infty} V_n =0$. If the regularization constant $c$ is chosen so that
\begin{equation}\label{cond_1_simple}
   \left(\frac{2\alpha M}{c}\right)^{1/2}
                        +   \frac{2\alpha}{(\alpha-1)c^{1/2}}  < \frac{1}{4},
\end{equation}
then, there exists a sample size $\tdm$ such that \eqref{cond_1} and \eqref{cond_2} are satisfied for all $m>\tdm$ and $n=\alpha m$. In particular, if $\alpha=2$ we can satisfy \eqref{cond_1} and \eqref{cond_2} with $c> 64(\sqrt{M}+2)^2$.
\end{proposition}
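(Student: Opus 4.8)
The plan is to substitute $n=\alpha m$ into \eqref{cond_1} and \eqref{cond_2} and study each term as $m\to\infty$, using only the two hypotheses $V_m\le\alpha V_n$ and $V_n\to 0$. The organizing observation is that \eqref{cond_1} is the genuinely binding constraint that pins down the admissible range of the regularization constant $c$, whereas \eqref{cond_2} will be satisfied automatically for all large $m$ for \emph{any} fixed $c$. Establishing these two facts separately, and then taking the larger of the two resulting sample-size thresholds, yields the claimed $\tilde m$.

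For \eqref{cond_1} I would bound its three summands one at a time. In the first summand $V_m\to 0$ forces $M+cV_m\to M$, while $V_m/V_n\le\alpha$, so its limit superior is at most $(2\alpha M/c)^{1/2}$, exactly the first term of \eqref{cond_1_simple}. The middle summand is the constant $\frac{2(n-m)}{nc^{1/2}}=\frac{2(\alpha-1)}{\alpha c^{1/2}}$, and the elementary inequality $\frac{\alpha-1}{\alpha}\le\frac{\alpha}{\alpha-1}$ (valid for $\alpha>1$) shows it is dominated by the second term of \eqref{cond_1_simple}. The third summand carries the factor $(V_m-V_n)/(cV_n)^{1/2}\le (\alpha-1)V_n^{1/2}/c^{1/2}\to 0$, using $V_m-V_n\le(\alpha-1)V_n$, so it vanishes. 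Consequently the limit superior of the left side of \eqref{cond_1} is at most the left side of \eqref{cond_1_simple}, which is strictly below $1/4$; hence \eqref{cond_1} holds for all $m$ past some threshold $\tilde m_1$.

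The heart of the argument, and the step I expect to be the main obstacle, is \eqref{cond_2}. Here I would show the parenthesized sum $B$ is $O(V_n)$, so that $144B^2=O(V_n^2)$ eventually drops below $V_n$ because $V_n\to 0$. Each summand of $B$ is $O(V_n)$: $V_m\le\alpha V_n$, $V_m-V_n\le(\alpha-1)V_n$, and $\frac{c(V_m-V_n)}{2}\|\bbw^*\|^2\le\frac{c(\alpha-1)\|\bbw^*\|^2}{2}V_n$. The delicate term is $V_{n-m}=V_{(\alpha-1)m}$; for $\alpha\ge 2$ monotonicity of $V_n$ gives $V_{(\alpha-1)m}\le V_m\le\alpha V_n$, while for the canonical rates $V_n=O(n^{-1/2})$ or $O(n^{-1})$ one has $V_{n-m}/V_n\to(\tfrac{\alpha}{\alpha-1})^{1/2}$ or $\tfrac{\alpha}{\alpha-1}$, a finite constant. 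Collecting these gives $B\le C V_n$ with finite $C=C(\alpha,c,\|\bbw^*\|,M)$, whence $144B^2\le 144C^2V_n^2\le V_n$ as soon as $V_n\le (144C^2)^{-1}$, which holds for all $m$ past some $\tilde m_2$. Taking $\tilde m=\max\{\tilde m_1,\tilde m_2\}$ finishes the existence claim; the key point is that squaring turns an $O(V_n)$ quantity into an $o(V_n)$ quantity, so $c$ enters \eqref{cond_2} only through the harmless constant $C$.

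Finally, for $\alpha=2$ I would simply specialize \eqref{cond_1_simple}: its left side becomes $(4M/c)^{1/2}+4/c^{1/2}=(2\sqrt{M}+4)/c^{1/2}$, and demanding this be $<1/4$ is equivalent to $c^{1/2}>8(\sqrt{M}+2)$, i.e. $c>64(\sqrt{M}+2)^2$, as stated. For $\alpha=2$ the obstacle above disappears entirely, since $n-m=m$ gives $V_{n-m}=V_m$ and $\frac{2(n-m)}{n}=1$, so $B=3V_m+2(V_m-V_n)+\tfrac{c}{2}(V_m-V_n)\|\bbw^*\|^2$ is manifestly $O(V_n)$ with no appeal to monotonicity or to a specific rate.
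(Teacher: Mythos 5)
Your proposal is correct and follows essentially the same route as the paper's proof: term-by-term asymptotics showing the third summand of \eqref{cond_1} vanishes as $m\to\infty$, the first two summands are asymptotically dominated by the left side of \eqref{cond_1_simple} (whose strictness yields the threshold), and the left side of \eqref{cond_2} is of order $V_n^2$ against a right side of order $V_n$. You are in fact somewhat more careful than the paper at three points: you make explicit the inequality $\frac{2(\alpha-1)}{\alpha c^{1/2}} \leq \frac{2\alpha}{(\alpha-1)c^{1/2}}$, where the paper presents the substitution $n=\alpha m$ as if it produced the second summand of \eqref{cond_1_simple} exactly rather than a smaller quantity; you explicitly control the $V_{n-m}$ term (via monotonicity for $\alpha\geq 2$), which the paper's one-line claim that the left side of \eqref{cond_2} is of order $V_m^2$ silently subsumes; and you carry out the $\alpha=2$ computation leading to $c>64(\sqrt{M}+2)^2$, which the paper asserts without calculation.
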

%%%%%%%%%%%%%%%%%%%%%%%%%%%%%%%%%%%%%%%%%%%%%%%%%%%%%%%%%%%%%%%%
%%%   P   R   O   O   F   %%%%%%%%%%%%%%%%%%%%%%%%%%%%%%%%%%%%%%
%%%%%%%%%%%%%%%%%%%%%%%%%%%%%%%%%%%%%%%%%%%%%%%%%%%%%%%%%%%%%%%%
%
\begin{proof} That the condition in \eqref{cond_2} is satisfied for all $m>\tdm$ follows simply because the left hand side is of order $V_m^2$ and the right hand side is of order $V_n$. To show that the condition in \eqref{cond_1} is satisfied for sufficiently large $m$ observe that the third summand in \eqref{cond_1} is of order $O((V_m - V_n)/V_n^{1/2})$ and vanishes for large $m$. In the second summand of \eqref{cond_1} we make $n=\alpha m$ to obtain the second summand in \eqref{cond_1_simple} and in the first summand replace the ratio $V_m/V_n$ by its bound $\alpha$ to obtain the first summand of \eqref{cond_1_simple}. To conclude the proof just observe that the inequality in \eqref{cond_1_simple} is strict. \end{proof}

%%%%%%%%%%%%%%%%%%%%%%%%%%%%%%%%%%%%%%%%%%%%%%%%%%%%%%%%%%%%%%%%
%%%   M   A   I   N       M   A   T   T   E   R   %%%%%%%%%%%%%%
%%%%%%%%%%%%%%%%%%%%%%%%%%%%%%%%%%%%%%%%%%%%%%%%%%%%%%%%%%%%%%%%
%
The condition $V_m\leq\alpha V_n$ is satisfied if $V_n=1/n$ and is also satisfied if $V_n=1/\sqrt{n}$ because $\sqrt{\alpha}<\alpha$. This means that for most ERM problems we can progress geometrically over the sample size and arrive at a solution $\bbw_N$ that solves the ERM problem $R_N$ to its statistical accuracy $V_N$ as long as \eqref{cond_1_simple} is satisfied . 

The result in Theorem \ref{the_main_result_theorem} motivates definition of the Ada Newton algorithm that we summarize in Algorithm \ref{alg:AdaNewton}. The core of the algorithm is in steps 6-9. Step 6 implements an increase in the sample size by a factor $\alpha$ and steps 7-9 implement the Newton iteration in \eqref{eqn_gradient}-\eqref{eqn_ada_mewton}.  The required input to the algorithm is an initial sample size $m_0$ and a variable $\bbw_{m_0}$ that is known to solve the ERM problem with accuracy $V_{m_0}$. Observe that this initial iterate doesn't have to be computed with Newton iterations. The initial problem to be solved contains a moderate number of samples $m_0$, a mild condition number because it is regularized with constant $cV_{m_0}$, and is to be solved to a moderate accuracy $V_{m_0}$ -- recall that $V_{m_0}$ is of order $V_{m_0}=O(1/m_0)$ or order $V_{m_0}=O(1/\sqrt{{m_0}})$ depending on regularity assumptions. Stochastic first order methods excel at solving problems with moderate number of samples $m_0$ and moderate condition to moderate accuracy.

We remark that the conditions in Theorem \ref{the_main_result_theorem} and Proposition \ref{prop_condition} are conceptual but that the constants involved are unknown in practice. In particular, this means that the allowed values of the factor $\alpha$ that controls the growth of the sample size are unknown a priori. We solve this problem in Algorithm~\ref{alg:AdaNewton} by backtracking the increase in the sample size until we guarantee that $\bbw_n$ minimizes the empirical risk $R_{n}(\bbw_n)$ to within its statistical accuracy. This backtracking of the sample size is implemented in Step 11 and the optimality condition of $\bbw_n$ is checked in Step 12. The condition in Step 12 is on the gradient norm that, because $R_n$ is strongly convex, can be used to bound the suboptimality $R_{n}(\bbw_n)- R_{n}(\bbw_n^*)$ as
\begin{equation}\label{optimality_check}
   R_{n}(\bbw_n)- R_{n}(\bbw_n^*) \leq \frac{1}{2cV_n} \|\nabla R_{n}(\bbw_n)\|^2.
\end{equation}
Observe that checking this condition requires an extra gradient computation undertaken in Step 10. That computation can be reused in the computation of the gradient in Step 5 once we exit the backtracking loop. We emphasize that when the condition in \eqref{cond_1_simple} is satisfied, there exist a sufficiently large $m$ for which the conditions in Theorem \ref{the_main_result_theorem} are satisfied for $n=\alpha m$. This means that the backtracking condition in Step 12 is satisfied after one iteration and that, eventually, Ada Newton progresses by increasing the sample size by a factor $\alpha$. This means that Algorithm \ref{alg:AdaNewton} can be thought of as having a damped phase where the sample size increases by a factor smaller than $\rho$ and a geometric phase where the sample size grows by a factor $\rho$ in all subsequent iterations. The computational cost of this geometric phase is of not more than $\alpha/({\alpha-1})$ passes over the dataset and requires inverting not more than $\log_\alpha N$ Hessians. If $c> 64(\sqrt{M}+2)^2$, we make $\alpha=2$ for optimizing to within statistical accuracy in about 2 passes over the dataset and after inversion of about $3.32\log_{10} N$ Hessians.

%!TEX root = adap-ss-nips.tex

\section{Convergence Analysis}\label{sec_convergence}

In this section we study the proof of Theorem \ref{the_main_result_theorem}. In proving this result we first assume the following conditions are satisfied. 

%%%%%%%%%%%%%%%%%%%%%%%%%%%%%%%%%%%%
%%%%%%%%%%%%%%%%%%%%%%%%%%%%%%%%%%%%
%%%%  A  S  S  U  M  P  T  I  O  N  %%%%%%%%%%%%%%%%%%
%%%%%%%%%%%%%%%%%%%%%%%%%%%%%%%%%%%%
%%%%%%%%%%%%%%%%%%%%%%%%%%%%%%%%%%%%
\begin{assumption}\label{convexity_lip_assumption}
The loss functions $f(\bbw,\bbz)$ are convex with respect to $\bbw$ for all values of $\bbz$. Moreover, their gradients $\nabla f(\bbw,\bbz)$ are Lipschitz continuous with constant $M$
\begin{equation}\label{lip_ass_cond}
\|\nabla f(\bbw,\bbz)-\nabla f(\bbw',\bbz)\| \leq M \|\bbw-\bbw'\|,\qquad \text{for all $\bbz$.}
\end{equation}
\end{assumption}

%%%%%%%%%%%%%%%%%%%%%%%%%%%%%%%%%%%%
%%%%%%%%%%%%%%%%%%%%%%%%%%%%%%%%%%%%
%%%%  A  S  S  U  M  P  T  I  O  N  %%%%%%%%%%%%%%%%%%
%%%%%%%%%%%%%%%%%%%%%%%%%%%%%%%%%%%%
%%%%%%%%%%%%%%%%%%%%%%%%%%%%%%%%%%%%
\begin{assumption}\label{self_concor}
The loss functions $f(\bbw,\bbz)$ are self-concordant with respect to $\bbw$ for all $\bbz$. 
\end{assumption}

%%%%%%%%%%%%%%%%%%%%%%%%%%%%%%%%%%%%
%%%%%%%%%%%%%%%%%%%%%%%%%%%%%%%%%%%%
%%%%  A  S  S  U  M  P  T  I  O  N  %%%%%%%%%%%%%%%%%%
%%%%%%%%%%%%%%%%%%%%%%%%%%%%%%%%%%%%
%%%%%%%%%%%%%%%%%%%%%%%%%%%%%%%%%%%%
%\red{\begin{assumption}\label{opt_solo_ass}
%The norm of the optimal argument of the statistical average loss $L(\bbw)$ is bounded and of order constant, i.e., $\|\bbw^*\|=O(1)$.
%\end{assumption}}

%%%%%%%%%%%%%%%%%%%%%%%%%%%%%%%%%%%%
%%%%%%%%%%%%%%%%%%%%%%%%%%%%%%%%%%%%
%%%%  A  S  S  U  M  P  T  I  O  N  %%%%%%%%%%%%%%%%%%
%%%%%%%%%%%%%%%%%%%%%%%%%%%%%%%%%%%%
%%%%%%%%%%%%%%%%%%%%%%%%%%%%%%%%%%%%
\begin{assumption}\label{grad_cond}
The difference between the gradients of the empirical loss $L_n$ and the statistical average loss $L$ is bounded by $V_n^{1/2}$ for all $\bbw$ with high probability,
\begin{align}\label{eqn_loss_minus_erm_2}
   \sup_{\bbw}\|\nabla L(\bbw) - \nabla L_{n}(\bbw) \|  \leq V_n^{1/2},  \qquad\whp.
\end{align}
\end{assumption}

%%%%%%%%%%%%%%%%%%%%%%%%%%%%%%%%%%%%
%%%%%%%%%%%%%%%%%%%%%%%%%%%%%%%%%%%%
%%%%   M  A  I  N      M  A  T  T  E  R   %%%%%%%%%%%%%%%%
%%%%%%%%%%%%%%%%%%%%%%%%%%%%%%%%%%%%
%%%%%%%%%%%%%%%%%%%%%%%%%%%%%%%%%%%%
The conditions in Assumption \ref{convexity_lip_assumption} imply that the average loss $L(\bbw)$ and the empirical loss $L_n(\bbw)$ are convex and their gradients are Lipschitz continuous with constant $M$. Thus, the empirical risk $R_n(\bbw)$ is strongly convex with constant $cV_n$ and its gradients $\nabla R_n(\bbw)$ are Lipschitz continuous with parameter $M+cV_n$. Likewise, the condition in Assumption \ref{self_concor} implies that the average loss $L(\bbw)$, the empirical loss $L_n(\bbw)$, and the empirical risk $R_n(\bbw)$ are also self-concordant. The condition in Assumption \ref{grad_cond} says that the gradients of the empirical risk converge to their statistical average at a rate of order $V_n^{1/2}$. If the constant $V_n$ in condition \eqref{eqn_loss_minus_erm} is of order not faster than $O(1/n)$ the condition in Assumption \ref{grad_cond} holds if the gradients converge to their statistical average at a rate of order $V_n^{1/2}=O(1/\sqrt{n})$. This is a conservative rate for the law of large numbers.

%\red{Assumption \ref{opt_solo_ass} guarantees that the norm of the optimal argument $\|\bbw^*\|$ is of order constant. Moreover, the norm $\|\bbw^*\|$ is independent of the size of the training set and does not grow by increasing the size of the training set $N$.}

The main idea of the Ada Newton algorithm is introducing a policy for increasing the size of training set from $m$ to $n$ in a way that the current variable $\bbw_m$ is in the Newton quadratic convergence phase for the next regularized empirical risk $R_n$. In the following proposition, we characterize the required condition to guarantee staying in the local neighborhood of Newton's method.

%%%%%%%%%%%%%%%%%%%%%%%%%%%%%%%%%%%%
%%%%%%%%%%%%%%%%%%%%%%%%%%%%%%%%%%%%
%%%%     P  R  O  P  O  S  I  T  I  O  N     %%%%%%%%%%%%%%%
%%%%%%%%%%%%%%%%%%%%%%%%%%%%%%%%%%%%
%%%%%%%%%%%%%%%%%%%%%%%%%%%%%%%%%%%%
\begin{proposition}\label{main_prop}
Consider the sets $\S_m$ and $\S_n$ as subsets of the training set $\T$ such that $\S_m \subset \S_n\subset \T$. We assume that the number of samples in the sets  $\S_m$ and $\S_n$ are $m$ and $n$, respectively. Further, define $\bbw_{m}$ as an $V_m$ optimal solution of the risk $R_{m}$, i.e., ${R_{m}(\bbw_m)-R_{m}(\bbw_m^*) \leq V_m}$. In addition, define $\lambda_n (\bbw):=\left(\nabla R_{n}(\bbw) ^T \nabla^2 R_{n}(\bbw) ^{-1} \nabla R_{n}(\bbw) \right)^{1/2}$ as the Newton decrement of variable $\bbw$ associated with the risk $R_{n}$. If Assumption \ref{convexity_lip_assumption}-\ref{grad_cond} hold, then Newton's method at point $\bbw_m$ is in the quadratic convergence phase for the objective function $R_{n}$, i.e., $\lambda_n(\bbw_m)<1/4$, if we have
%%%%
\begin{equation}\label{prop_result}
   \left(\frac{2(M+cV_m)V_{m}}{cV_n}\right)^{1/2}\!
                            + \frac{{(2(n-m)}/{n}) V_n^{1/2} +(\sqrt{2c}+2\sqrt{c}+c\|\bbw^*\|)(V_m-V_n)}{(cV_n)^{1/2}}\leq \frac{1}{4}
\quad \text{w.h.p.}
\end{equation}
%%%%
\end{proposition}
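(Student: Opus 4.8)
The plan is to reduce the geometric-sounding claim ``$\bbw_m$ is in the quadratic convergence phase of Newton's method for $R_n$'' to the purely analytic statement $\lambda_n(\bbw_m)<1/4$, and then to control the Newton decrement through the ordinary gradient norm $\|\nabla R_n(\bbw_m)\|$. By Assumption \ref{self_concor} the risk $R_n$ is self-concordant, and the standard theory of self-concordant minimization guarantees that a unit-step Newton iteration started at a point with Newton decrement below $1/4$ contracts quadratically; hence it suffices to prove $\lambda_n(\bbw_m)<1/4$. Since Assumption \ref{convexity_lip_assumption} makes $R_n$ strongly convex with constant $cV_n$, we have $\nabla^2 R_n(\bbw_m)^{-1}\preceq (cV_n)^{-1}\bbI$, so that
\[
\lambda_n(\bbw_m)=\left(\nabla R_n(\bbw_m)^{T}\nabla^2 R_n(\bbw_m)^{-1}\nabla R_n(\bbw_m)\right)^{1/2}\leq \frac{1}{(cV_n)^{1/2}}\,\|\nabla R_n(\bbw_m)\|.
\]
The whole argument then comes down to bounding $\|\nabla R_n(\bbw_m)\|$ and checking that $(cV_n)^{-1/2}\|\nabla R_n(\bbw_m)\|$ is at most the left-hand side of \eqref{prop_result}.

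To bound the gradient I would split $\nabla R_n(\bbw_m)=\nabla R_m(\bbw_m)+\big(\nabla R_n(\bbw_m)-\nabla R_m(\bbw_m)\big)$ and treat the two pieces separately. For the first piece, $R_m$ is $(M+cV_m)$-smooth, so the standard descent inequality $\|\nabla R_m(\bbw_m)\|^2\le 2(M+cV_m)\big(R_m(\bbw_m)-R_m(\bbw_m^*)\big)$ combined with the hypothesis $R_m(\bbw_m)-R_m(\bbw_m^*)\le V_m$ gives $\|\nabla R_m(\bbw_m)\|\le (2(M+cV_m)V_m)^{1/2}$; dividing by $(cV_n)^{1/2}$ reproduces the first summand of \eqref{prop_result}.

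For the second piece I would write $\nabla R_n(\bbw_m)-\nabla R_m(\bbw_m)=\big(\nabla L_n(\bbw_m)-\nabla L_m(\bbw_m)\big)+c(V_n-V_m)\bbw_m$. The empirical-gradient difference is handled by isolating the fresh block $\S_n\setminus\S_m$: from $n\nabla L_n=m\nabla L_m+(n-m)\nabla L_{n-m}$ (with $L_{n-m}$ the empirical loss on the new samples) one gets $\nabla L_n(\bbw_m)-\nabla L_m(\bbw_m)=\tfrac{n-m}{n}\big(\nabla L_{n-m}(\bbw_m)-\nabla L_m(\bbw_m)\big)$, and inserting the population gradient $\nabla L(\bbw_m)$ and applying Assumption \ref{grad_cond} to each block bounds this by $\tfrac{n-m}{n}\big(V_{n-m}^{1/2}+V_m^{1/2}\big)$, the $\tfrac{2(n-m)}{n}$-weighted statistical-accuracy term in the middle of \eqref{prop_result}. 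The regularization difference has norm $c(V_m-V_n)\|\bbw_m\|$, so it remains to bound $\|\bbw_m\|$ by $\|\bbw^*\|$ plus lower-order corrections: I would use $\|\bbw_m\|\le\|\bbw_m-\bbw_m^*\|+\|\bbw_m^*-\bbw^*\|+\|\bbw^*\|$, control $\|\bbw_m-\bbw_m^*\|\le (2/c)^{1/2}$ from $cV_m$-strong convexity and the $V_m$-suboptimality, and control $\|\bbw_m^*-\bbw^*\|$ through the optimality identity $cV_m\bbw_m^*=-\nabla L_m(\bbw_m^*)$ together with $\nabla L(\bbw^*)=0$ and Assumption \ref{grad_cond}; multiplying by $c(V_m-V_n)$ produces the three-term factor $(\sqrt{2c}+2\sqrt{c}+c\|\bbw^*\|)(V_m-V_n)$.

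The main obstacle is this last estimate: bounding $\|\bbw_m\|$ with clean, dimension-free constants. The proximity $\|\bbw_m-\bbw_m^*\|$ to the regularized minimizer is immediate, but relating $\bbw_m^*$ to the population optimizer $\bbw^*$ is delicate, since the naive gradient bound reintroduces $\|\bbw^*\|$ along with a $V_m^{-1/2}$ factor, and forcing the constants to collapse to exactly $2\sqrt{c}$ is where the regularization scale $cV_m$ must be exploited precisely. Once the three bounds are assembled and divided by $(cV_n)^{1/2}$, their sum is exactly the left-hand side of \eqref{prop_result}; the hypothesis of the proposition then forces $\lambda_n(\bbw_m)\le 1/4$, which places $\bbw_m$ in the quadratic convergence phase and completes the argument.
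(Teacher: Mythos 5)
Your overall architecture coincides with the paper's: reduce the claim to $\lambda_n(\bbw_m)<1/4$ via self-concordance, control the decrement through $\lambda_n(\bbw_m)\le (cV_n)^{-1/2}\|\nabla R_n(\bbw_m)\|$ using the $cV_n$-strong convexity of $R_n$, split $\nabla R_n(\bbw_m)$ into $\nabla R_m(\bbw_m)$ plus a difference term, bound the first summand by $(2(M+cV_m)V_m)^{1/2}$ via smoothness and $V_m$-suboptimality, decompose $\nabla L_n-\nabla L_m$ over the fresh block $\S_{n-m}$ using Assumption \ref{grad_cond}, and bound $\|\bbw_m-\bbw_m^*\|\le (2/c)^{1/2}$ by $cV_m$-strong convexity. (The paper applies the triangle inequality inside the weighted norm $\|\cdot\|_{\bbH_n^{-1}}$ and then uses $\|\bbH_n^{-1}\|\le 1/(cV_n)$, which amounts to the same estimate.)

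However, the last ingredient --- precisely the one you flag as the ``main obstacle'' --- is a genuine gap, and the route you sketch for it would fail. You propose to control $\|\bbw_m^*-\bbw^*\|$ from the stationarity identity $cV_m\bbw_m^*=-\nabla L_m(\bbw_m^*)$ together with $\nabla L(\bbw^*)=0$ and Assumption \ref{grad_cond}. Running that argument through the $cV_m$-strong convexity of $R_m$ gives $cV_m\|\bbw_m^*-\bbw^*\|\le\|\nabla R_m(\bbw^*)\|\le V_m^{1/2}+cV_m\|\bbw^*\|$, i.e.\ $\|\bbw_m^*-\bbw^*\|\le (cV_m^{1/2})^{-1}+\|\bbw^*\|$: the statistical gradient error $V_m^{1/2}$ is divided by the vanishing modulus $cV_m$, so the bound diverges as $m\to\infty$ and can never collapse to the constant $2/\sqrt{c}$ needed for the factor $(\sqrt{2c}+2\sqrt{c}+c\|\bbw^*\|)$. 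The paper avoids the difference $\|\bbw_m^*-\bbw^*\|$ altogether: Lemma \ref{lemma_2} bounds $\|\bbw_m^*\|$ directly by a \emph{function-value} comparison, $L_m(\bbw_m^*)+\tfrac{cV_m}{2}\|\bbw_m^*\|^2\le L_m(\bbw^*)+\tfrac{cV_m}{2}\|\bbw^*\|^2$, then uses the uniform bound \eqref{eqn_loss_minus_erm} twice together with $L(\bbw^*)\le L(\bbw_m^*)$ to obtain $L_m(\bbw^*)-L_m(\bbw_m^*)\le 2V_m$; the factor $V_m$ cancels exactly against the regularization weight, giving $\|\bbw_m^*\|^2\le 4/c+\|\bbw^*\|^2$ w.h.p.\ and hence $\|\bbw_m^*\|\le 2/\sqrt{c}+\|\bbw^*\|$. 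That zeroth-order comparison, rather than the first-order optimality condition, is the missing idea. A smaller point: your bound on the fresh-block term is $\tfrac{n-m}{n}(V_{n-m}^{1/2}+V_m^{1/2})$, which you assert equals the middle term $\tfrac{2(n-m)}{n}V_n^{1/2}$ of \eqref{prop_result}; since $V_{n-m},V_m\ge V_n$ it is in fact larger, but the paper's own display \eqref{proof_prop_new_50} performs the same unjustified replacement, so here you have faithfully reproduced a weakness of the paper rather than introduced one.
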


%%%%%%%%%%%%%%%%%%%%%%%%%%%%%%%%%%%%
%%%%%%%%%%%%%%%%%%%%%%%%%%%%%%%%%%%%
%%%%     P   R   O  O   F      %%%%%%%%%%%%%%%%%%%%%
%%%%%%%%%%%%%%%%%%%%%%%%%%%%%%%%%%%%
%%%%%%%%%%%%%%%%%%%%%%%%%%%%%%%%%%%%
\begin{proof}
See Section 7.1 in the Appendix.
\end{proof}

%%%%%%%%%%%%%%%%%%%%%%%%%%%%%%%%%%%%
%%%%%%%%%%%%%%%%%%%%%%%%%%%%%%%%%%%%
%%%%   M  A  I  N      M  A  T  T  E  R   %%%%%%%%%%%%%%%%
%%%%%%%%%%%%%%%%%%%%%%%%%%%%%%%%%%%%
%%%%%%%%%%%%%%%%%%%%%%%%%%%%%%%%%%%%
From the analysis of Newton's method we know that if the Newton decrement $\lambda_n(\bbw)$ is smaller than $1/4$, the variable $\bbw$ is in the local neighborhood of Newton's method; see e.g., Chapter 9 of \cite{boyd04}. From the result in Proposition \ref{main_prop}, we obtain a sufficient condition to guarantee that $\lambda_n(\bbw_m)<1/4$ which implies that $\bbw_m$, which is a $V_m$ optimal solution for the regularized empirical loss $R_m$, i.e., $R_m(\bbw_m)-R_m(\bbw_m^*)\leq V_m$, is in the local neighborhood of the optimal argument of $R_n$ that Newton's method converges quadratically. 

Unfortunately, the quadratic convergence of Newton's method for self-concordant functions is in terms of the Newton decrement $\lambda_n(\bbw)$ and it does not necessary guarantee quadratic convergence in terms of objective function error. To be more precise, we can show that $\lambda_n (\bbw_n) \leq \gamma \lambda_n (\bbw_m)^2$; however, we can not conclude that the quadratic convergence of Newton's method implies $R_n(\bbw_n)-R_n(\bbw_n^*) \leq \gamma' (R_n(\bbw_m)-R_n(\bbw_n^*) )^2$. In the following proposition we try to characterize an upper bound for the error $R_n(\bbw_n)-R_n(\bbw_n^*)$ in terms of the squared error $(R_n(\bbw_m)-R_n(\bbw_n^*) )^2$ using the quadratic convergence property of Newton decrement.

%%%%%%%%%%%%%%%%%%%%%%%%%%%%%%%%%%%%
%%%%%%%%%%%%%%%%%%%%%%%%%%%%%%%%%%%%
%%%%     P  R  O  P  O  S  I  T  I  O  N     %%%%%%%%%%%%%%%
%%%%%%%%%%%%%%%%%%%%%%%%%%%%%%%%%%%%
%%%%%%%%%%%%%%%%%%%%%%%%%%%%%%%%%%%%
\begin{proposition}\label{quadratic_convg_prop}
Consider $\bbw_{m}$ as a variable that is in the local neighborhood of the optimal argument of the risk $R_{n}$ where Newton's method has a quadratic convergence rate, i.e., $\lambda_n (\bbw_m)\leq1/4$. Recall the definition of the variable $\bbw_n$ in \eqref{eqn_ada_mewton} as the updated variable using Newton step. If Assumption \ref{convexity_lip_assumption} and \ref{self_concor} hold, then the difference $R_n(\bbw_n)-R_n(\bbw_n^*)$ is upper bounded by
%%%%
\begin{equation}\label{prop_result_2}
		R_n(\bbw_n)-R_n(\bbw_n^*) \leq 144 (R_n(\bbw_m)-R_n(\bbw_n^*) )^2.
\end{equation}
%%%%
\end{proposition}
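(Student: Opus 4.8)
The plan is to route the entire argument through the Newton decrement $\lambda_n(\cdot)$, since --- as the paragraph preceding the statement emphasizes --- the quadratic convergence of Newton's method for self-concordant functions is expressed in the decrement and not directly in the objective-value gap. Writing $a := R_n(\bbw_m)-R_n(\bbw_n^*)$ and $b := R_n(\bbw_n)-R_n(\bbw_n^*)$, I would assemble three ingredients: (i) an upper bound of $b$ by $\lambda_n(\bbw_n)^2$; (ii) the quadratic contraction of $\lambda_n(\bbw_n)$ in $\lambda_n(\bbw_m)$; and (iii) a lower bound of $a$ by a multiple of $\lambda_n(\bbw_m)^2$. Chaining these turns the decrement contraction into the claimed objective contraction.

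Concretely, I would invoke the standard self-concordant bounds (Chapter 9 of \cite{boyd04}) stated in terms of $\omega(t)=t-\log(1+t)$ and $\omega_*(t)=-t-\log(1-t)$. For $\bbw_n$, as long as $\lambda_n(\bbw_n)<1$ one has $b \le \omega_*(\lambda_n(\bbw_n)) \le \lambda_n(\bbw_n)^2$, the last step being licensed once $\lambda_n(\bbw_n)\le 0.68$. The quadratic convergence of the decrement gives $\lambda_n(\bbw_n)\le (\lambda_n(\bbw_m)/(1-\lambda_n(\bbw_m)))^2$, and since $\lambda_n(\bbw_m)\le 1/4$ by hypothesis this yields $\lambda_n(\bbw_n)\le \tfrac{16}{9}\lambda_n(\bbw_m)^2\le 2\lambda_n(\bbw_m)^2\le 1/8$, which in particular certifies $\lambda_n(\bbw_n)<0.68$. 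Hence $b\le 4\lambda_n(\bbw_m)^4$. For $\bbw_m$ I would use the lower bound $a\ge \omega(\lambda_n(\bbw_m))$ together with the elementary estimate $\omega(t)\ge \tfrac16 t^2$ on $[0,1/4]$ (immediate from $\omega(t)=\tfrac{t^2}{2}-\tfrac{t^3}{3}+\cdots$), to obtain $\lambda_n(\bbw_m)^2\le 6a$. Substituting, $b\le 4\lambda_n(\bbw_m)^4 = 4(\lambda_n(\bbw_m)^2)^2 \le 4\cdot 36\, a^2 = 144\,a^2$, which is exactly \eqref{prop_result_2}.

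The routine parts are the constant bookkeeping and the invocation of the cited self-concordant inequalities; the genuine subtlety --- the one the authors flag before the statement --- is that a direct objective-value contraction $b\le \gamma' a^2$ is \emph{not} available from the textbook theory, so the whole difficulty lies in correctly matching two different one-sided conversions: the $\omega_*$ (upper) bound must be applied at $\bbw_n$ and the $\omega$ (lower) bound at $\bbw_m$, and one must check that the decrement at $\bbw_n$ indeed lands in the range where $\omega_*(s)\le s^2$ is legitimate. Using the hypothesis $\lambda_n(\bbw_m)\le 1/4$ to place $\lambda_n(\bbw_n)$ well inside $(0,1)$ is what licenses all three conversions. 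Note finally that the statement is deterministic given $\lambda_n(\bbw_m)\le 1/4$, so no high-probability qualifier is needed here; the randomness enters only in securing that hypothesis, via Proposition \ref{main_prop}.
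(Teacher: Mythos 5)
Your proposal is correct and follows essentially the same route as the paper's proof: the $\omega$/$\omega_*$ sandwich between the objective gap and the Newton decrement (yielding the constants $1/6$ and $1$), the quadratic contraction $\lambda_n(\bbw_n)\le 2\lambda_n(\bbw_m)^2$, and the chaining $b\le 4\lambda_n(\bbw_m)^4\le 4\cdot 36\,a^2=144\,a^2$. If anything, you are slightly more careful than the paper in explicitly certifying that $\lambda_n(\bbw_n)\le 1/8$ lands in the range where $\omega_*(s)\le s^2$ applies, a step the paper leaves implicit.
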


%%%%%%%%%%%%%%%%%%%%%%%%%%%%%%%%%%%%
%%%%%%%%%%%%%%%%%%%%%%%%%%%%%%%%%%%%
%%%%     P   R   O  O   F      %%%%%%%%%%%%%%%%%%%%%
%%%%%%%%%%%%%%%%%%%%%%%%%%%%%%%%%%%%
%%%%%%%%%%%%%%%%%%%%%%%%%%%%%%%%%%%%
	\begin{proof}
To prove the result in \eqref{prop_result_2} first we need to find upper and lower bounds for the difference $R_n(\bbw)-R_n(\bbw_n^*)$ in terms of the Newton decrement parameter $\lambda_n (\bbw)$. To do so, we use the result in Theorem 4.1.11 of \cite{nesterov1998introductory} which shows that
%%%
 \begin{equation}\label{proof_of_prop_10}
		  \lambda_n (\bbw) - \ln\left(1+\lambda_n (\bbw)\right) \leq  
	 R_n(\bbw)-R_n(\bbw_n^*)  \leq
	  - \lambda_n (\bbw) - \ln\left(1-\lambda_n (\bbw)\right).
\end{equation}
%%%
Note that we assume that $0<\lambda_n (\bbw)<1/4$. Thus, we can use the Taylor's expansion of $\ln(1+a)$ for $a=\lambda_n (\bbw)$ to show that $\lambda_n (\bbw) - \ln\left(1+\lambda_n (\bbw)\right)$ is bounded below by $(1/2)\lambda_n (\bbw)^2-(1/3)\lambda_n (\bbw)^3$. Since $0<\lambda_n (\bbw)<1/4$ we can show that $(1/6)\lambda_n (\bbw)^2\leq (1/2)\lambda_n (\bbw)^2-(1/3)\lambda_n (\bbw)^3$. Thus, the term $\lambda_n (\bbw) - \ln\left(1+\lambda_n (\bbw)\right)$ is bounded below by $(1/6)\lambda^2$. Likewise, we use Taylor's expansion of $\ln(1-a)$ for  $a=\lambda_n (\bbw)$ to show that $- \lambda_n (\bbw) - \ln\left(1-\lambda_n (\bbw)\right) $ is bounded above by $\lambda_n (\bbw)^2$ for $\lambda_n (\bbw)<1/4$; see e.g., Chapter 9 of \cite{boyd04}. Considering these bounds and the inequalities in \eqref{proof_of_prop_10} we can write 
%%%
 \begin{equation}\label{proof_of_prop_20}
	\frac{1}{6}\lambda_n (\bbw)^2\leq
	 R_n(\bbw)-R_n(\bbw_n^*)  \leq
	 \lambda_n (\bbw)^2.
\end{equation}
%%%

Recall that the variable $\bbw_m$ satisfies the condition $\lambda_n (\bbw_m)\leq 1/4$. Thus, according to the quadratic convergence rate of Newton's method for self-concordant functions \cite{boyd04}, we know that the Newton decrement has a quadratic convergence and we can write 
%%%
 \begin{equation}\label{proof_of_prop_30}
\lambda_n (\bbw_n) \leq 2 \lambda_n (\bbw_m)^2. 
\end{equation}
%%%
We use the result in \eqref{proof_of_prop_20} and \eqref{proof_of_prop_30} to show that the optimality error $R_n(\bbw_n)-R_n(\bbw_n^*)$ has an upper bound which is proportional to $(R_n(w_m)-R_n(\bbw_n^*) )^2$. In particular, we can write $ R_n(\bbw_n)-R_n(\bbw_n^*) \leq \lambda_n (\bbw_n)^2 $ based on the second inequality in \eqref{proof_of_prop_20}. This observation in conjunction with the result in \eqref{proof_of_prop_30} implies that
%%%
 \begin{align}\label{proof_of_prop_40}
 R_n(\bbw_n)-R_n(\bbw_n^*) \leq 4 \lambda_n (\bbw_m)^4.	
\end{align}
%%%
The first inequality in \eqref{proof_of_prop_20} implies that $\lambda_n (\bbw_m)^4\leq 36 (R_n(\bbw_m)-R_n(\bbw_n^*))^2 $. Thus, we can substitute $\lambda_n (\bbw_m)^4$ in \eqref{proof_of_prop_40} by $36 (R_n(\bbw_m)-R_n(\bbw_n^*))^2 $ to obtain the result in \eqref{prop_result_2}.
%%%%
% \begin{align}\label{proof_of_prop_50}
% R_n(\bbw_n)-R_n(\bbw_n^*) \leq 64  (R_n(\bbw_m)-R_n(\bbw_n^*))^2.	
%\end{align}
%%%%
\end{proof}

The result in Proposition \ref{quadratic_convg_prop} provides an upper bound for the sub-optimality $R_n(\bbw_n)-R_n(\bbw_n^*)$ in terms of the sub-optimality of variable $\bbw_m$ for the risk $R_n$, i.e., $R_n(\bbw_m)-R_n(\bbw_n^*) $. Recall that we know that $\bbw_m$ is in the statistical accuracy of $R_m$, i.e., $R_m(\bbw_m)-R_m(\bbw_m^*)\leq V_m$, and we aim to show that the updated variable $\bbw_n$ stays in the statistical accuracy of $R_n$, i.e., $R_n(\bbw_n)-R_n(\bbw_n^*)\leq V_n$. This can be done by showing that the upper bound for $R_n(\bbw_n)-R_n(\bbw_n^*)$ in \eqref{prop_result_2} is smaller than $V_n$. We proceed to derive an upper bound for  the sub-optimality $ R_{n}(\bbw_m) - R_n(\bbw_n^*)$ in the following proposition. 

%%%%%%%%%%%%%%%%%%%%%%%%%%%%%%%%%%%%
%%%%%%%%%%%%%%%%%%%%%%%%%%%%%%%%%%%%
%%%%     T   H   E   O   R    E    M     %%%%%%%%%%%%%%%%%
%%%%%%%%%%%%%%%%%%%%%%%%%%%%%%%%%%%%
%%%%%%%%%%%%%%%%%%%%%%%%%%%%%%%%%%%%
\begin{proposition}\label{main_theorem_444}
Consider the sets $\S_m$ and $\S_n$ as subsets of the training set $\T$ such that $\S_m \subset \S_n\subset \T$. We assume that the number of samples in the sets  $\S_m$ and $\S_n$ are $m$ and $n$, respectively. Further, define $\bbw_{m}$ as an $V_m$ optimal solution of the risk $R_{m}$, i.e., ${R_{m}(\bbw_m)-R_{m}^{*} \leq V_m}$. If Assumption \ref{convexity_lip_assumption}-\ref{grad_cond} hold, then the empirical risk error $ R_{n}(\bbw_m) - R_n(\bbw_n^*)$ of the variable $\bbw_m$ corresponding to the set $\S_n$ is bounded above by 
%%%%
\begin{equation}\label{theorem_result_444}
 R_{n}(\bbw_m) - R_n(\bbw_n^*) \leq V_m +  \frac{2(n-m)}{n} \left(V_{n-m}+V_m\right)
 +2\left(V_m-V_n\right)
 + \frac{c(V_m-V_n)}{2}\|\bbw^*\|^2
\quad \text{w.h.p.}
\end{equation}
%%%%
\end{proposition}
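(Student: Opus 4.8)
The plan is to avoid ever comparing the two inaccessible minimizers $\bbw_m^*$ and $\bbw_n^*$ directly, and instead to isolate the $V_m$-optimality hypothesis through the difference of the two risks. Introduce the difference function $g(\bbw):=R_n(\bbw)-R_m(\bbw)=\bigl(L_n(\bbw)-L_m(\bbw)\bigr)+\tfrac{c(V_n-V_m)}{2}\|\bbw\|^2$, so that $R_n=R_m+g$. Writing $R_n(\bbw_m)=R_m(\bbw_m)+g(\bbw_m)$ and $R_n(\bbw_n^*)=R_m(\bbw_n^*)+g(\bbw_n^*)$, and using $R_m(\bbw_n^*)\ge R_m(\bbw_m^*)$ because $\bbw_m^*$ minimizes $R_m$, I would obtain the basic inequality
\[
 R_n(\bbw_m)-R_n(\bbw_n^*)\;\le\;\bigl(R_m(\bbw_m)-R_m(\bbw_m^*)\bigr)+\bigl(g(\bbw_m)-g(\bbw_n^*)\bigr)\;\le\;V_m+\bigl(g(\bbw_m)-g(\bbw_n^*)\bigr),
\]
where the last step invokes the hypothesis $R_m(\bbw_m)-R_m(\bbw_m^*)\le V_m$. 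This reduces the whole problem to estimating $g(\bbw_m)-g(\bbw_n^*)$.

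For the $g$-term I would split off the loss part from the regularization part. The key device for the loss part is the exact sample decomposition $L_n(\bbw)=\tfrac{m}{n}L_m(\bbw)+\tfrac{n-m}{n}\bar{L}(\bbw)$, where $\bar{L}$ is the empirical average over the $n-m$ fresh samples $z_{m+1},\dots,z_n$; this gives $L_n(\bbw)-L_m(\bbw)=\tfrac{n-m}{n}\bigl(\bar{L}(\bbw)-L_m(\bbw)\bigr)$. Since $\bar{L}$ and $L_m$ are empirical averages over $n-m$ and $m$ i.i.d.\ samples, applying the accuracy bound \eqref{eqn_loss_minus_erm} to each and using the triangle inequality through the statistical loss $L$ yields $|\bar{L}(\bbw)-L_m(\bbw)|\le V_{n-m}+V_m$ uniformly in $\bbw$ with high probability. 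Using this at both $\bbw_m$ and $\bbw_n^*$ bounds the loss part of $g(\bbw_m)-g(\bbw_n^*)$ by $\tfrac{2(n-m)}{n}(V_{n-m}+V_m)$. For the regularization part, since $n>m$ gives $V_n\le V_m$ the coefficient $\tfrac{c(V_n-V_m)}{2}$ is nonpositive, so
\[
 \tfrac{c(V_n-V_m)}{2}\bigl(\|\bbw_m\|^2-\|\bbw_n^*\|^2\bigr)=\tfrac{c(V_m-V_n)}{2}\bigl(\|\bbw_n^*\|^2-\|\bbw_m\|^2\bigr)\le \tfrac{c(V_m-V_n)}{2}\|\bbw_n^*\|^2,
\]
after discarding the nonnegative $\|\bbw_m\|^2$ contribution. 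Together these give $g(\bbw_m)-g(\bbw_n^*)\le \tfrac{2(n-m)}{n}(V_{n-m}+V_m)+\tfrac{c(V_m-V_n)}{2}\|\bbw_n^*\|^2$.

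The final step, which I expect to be the main obstacle, is to replace the norm of the inaccessible regularized minimizer $\bbw_n^*$ by the norm of the statistical minimizer $\bbw^*$; this is precisely where the extra additive term $2(V_m-V_n)$ in \eqref{theorem_result_444} is generated. I would use optimality of $\bbw_n^*$, namely $R_n(\bbw_n^*)\le R_n(\bbw^*)$, which rearranges to $\tfrac{cV_n}{2}\|\bbw_n^*\|^2\le \tfrac{cV_n}{2}\|\bbw^*\|^2+\bigl(L_n(\bbw^*)-L_n(\bbw_n^*)\bigr)$. To control $L_n(\bbw^*)-L_n(\bbw_n^*)$ I would apply \eqref{eqn_loss_minus_erm} twice together with the fact that $\bbw^*$ minimizes the statistical loss $L$: from $L_n(\bbw^*)\le L(\bbw^*)+V_n$ and $L_n(\bbw_n^*)\ge L(\bbw_n^*)-V_n\ge L(\bbw^*)-V_n$ one gets $L_n(\bbw^*)-L_n(\bbw_n^*)\le 2V_n$, hence $\|\bbw_n^*\|^2\le \|\bbw^*\|^2+4/c$. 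Multiplying by $\tfrac{c(V_m-V_n)}{2}$ turns the $\|\bbw_n^*\|^2$ bound into $\tfrac{c(V_m-V_n)}{2}\|\bbw^*\|^2+2(V_m-V_n)$, and substituting back assembles exactly the four summands of \eqref{theorem_result_444}. The delicate points are ensuring that the several accuracy statements (on $L_m$, $L_n$, $\bar{L}$, evaluated at $\bbw_m$, $\bbw_n^*$ and $\bbw^*$) hold simultaneously via a union bound, and consistently tracking the sign of $V_n-V_m$ so that the regularization differences are always dropped in the favorable direction.
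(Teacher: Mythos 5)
Your proposal is correct and follows essentially the same route as the paper: your difference function $g = R_n - R_m$ is just a repackaging of the paper's four-term telescoping decomposition, your sample-splitting bound $|L_n(\bbw)-L_m(\bbw)|\leq \tfrac{n-m}{n}(V_{n-m}+V_m)$ is exactly the paper's Lemma \ref{lemma_1}, and your bound $\|\bbw_n^*\|^2 \leq \|\bbw^*\|^2 + 4/c$ via optimality of $\bbw_n^*$ and the uniform deviation bound is exactly the paper's Lemma \ref{lemma_2}. All the sign-tracking and the final assembly of the four summands match the paper's argument, so there is no gap to report.
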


%%%%%%%%%%%%%%%%%%%%%%%%%%%%%%%%%%%%
%%%%%%%%%%%%%%%%%%%%%%%%%%%%%%%%%%%%
%%%%     P   R   O  O   F      %%%%%%%%%%%%%%%%%%%%%
%%%%%%%%%%%%%%%%%%%%%%%%%%%%%%%%%%%%
%%%%%%%%%%%%%%%%%%%%%%%%%%%%%%%%%%%%
\begin{proof}
See Section 7.2 in the Appendix.
\end{proof}

%%%%%%%%%%%%%%%%%%%%%%%%%%%%%%%%%%%%
%%%%%%%%%%%%%%%%%%%%%%%%%%%%%%%%%%%%
%%%%   M  A  I  N      M  A  T  T  E  R   %%%%%%%%%%%%%%%%
%%%%%%%%%%%%%%%%%%%%%%%%%%%%%%%%%%%%
%%%%%%%%%%%%%%%%%%%%%%%%%%%%%%%%%%%%
The result in Proposition \ref{main_theorem_444} characterizes the sub-optimality of the variable $\bbw_m$, which is an $V_m$ sub-optimal solution for the risk $R_m$, with respect to the empirical risk $R_n$ associated with the set $\S_n$.

The results in Proposition \ref{main_prop}, \ref{quadratic_convg_prop}, and  \ref{main_theorem_444} lead to the result in Theorem \ref{the_main_result_theorem}. To be more precise, from the result in Proposition \ref{main_prop} we obtain that the condition in \eqref{cond_1} implies that $\bbw_m$ is in the local neighborhood of the optimal argument of $R_n$ and $\lambda_n(\bbw_m)\leq 1/4$. Hence, the hypothesis of Proposition \ref{quadratic_convg_prop} is satisfied and we have $R_n(\bbw_n)-R_n(\bbw_n^*) \leq 144 (R_n(\bbw_m)-R_n(\bbw_n^*) )^2$. This result paired with the result in Proposition \ref{main_theorem_444} shows that if the condition in \eqref{cond_2} is satisfied we can conclude that $R_n(\bbw_n)-R_n(\bbw_n^*)\leq V_n$ which completes the proof of Theorem \ref{the_main_result_theorem}.

%!TEX root = adap-ss-nips.tex

\section{Experiments}
In this section, we study the performance of the proposed Ada Newton method and compare it with state-of-the-art in solving a large-scale classification problem. We use the protein homology dataset provided on KDD cup 2004 website. The dataset contains $N=145751$ samples and the dimension of each sample is $p=74$. We consider three algorithms to compare with the proposed Ada Newton method. One of them is the classic Newton's method with backtracking line search. The second algorithm is Stochastic Gradient Descent (SGD) and the last one is the SAGA algorithm introduced in \cite{defazio2014saga}. In our experiments, we use logistic loss and set the regularization parameters as $c=200$ and $V_n=1/n$.

%%%%%%%%%%%%%%%%%%%%%%%%%%%%%%%%%%%
%%%%%%%%%%%%%%%%%%%%%%%%%%%%%%%%%%%
%%%%%    F  I  G  U  R  E    %%%%%%%%%%%%%%%%%%%%
%%%%%%%%%%%%%%%%%%%%%%%%%%%%%%%%%%%
%%%%%%%%%%%%%%%%%%%%%%%%%%%%%%%%%%%
 \begin{figure}[t]
	\centering
          \includegraphics[width=0.7\linewidth]{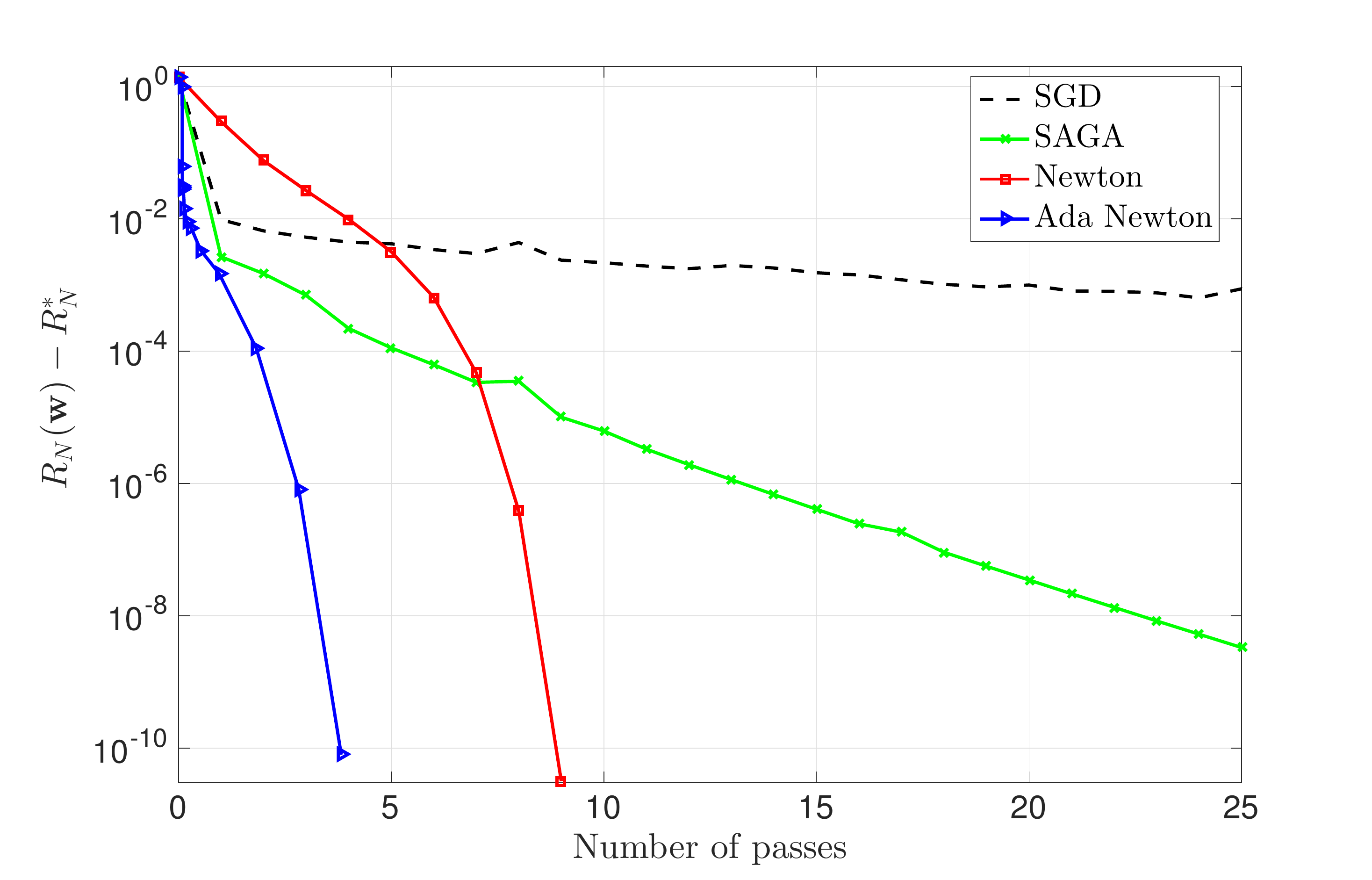} 
          \caption{Comparison of SGD, SAGA, Newton, and Ada Newton in terms of number of effective passes over dataset for the protein homology dataset.}
          \label{fig1}
\end{figure}

The stepsize of SGD in our experiments is $2\times 10^{-2}$. Note that picking larger stepsize leads to faster but less accurate convergence and choosing smaller stepsize improves the accuracy convergence with the price of slower convergence rate. The stepsize for SAGA is hand-optimized and the best performance has been observed for $\alpha=0.2$ which is the one that we use in the experiments. For  Newton's method, the backtracking line search parameters are $\alpha=0.4$ and $\beta=0.5$. In the implementation of Ada Newton we increase the size of the training set by factor $2$ at each iteration, i.e., $\alpha=2$ and we observe that the condition $\| \nabla R_{n}(\bbw_n)\| > (\sqrt{2 c}) V_n$ is always satisfied and there is no need for reducing the factor $\alpha$. Moreover, the size of initial training set is $m_0=124$. For the warmup step that we need to get into to the quadratic neighborhood of Newton's method we use the gradient descent method. In particular, we run gradient descent with stepsize $10^{-3}$ for $100$ iterations. Note that since the number of samples is very small at the beginning, $m_0=124$, and the regularizer is very large, the condition number of problem is very small. Thus, gradient descent is able to converge to a good neighborhood of the optimal solution in a reasonable time. Notice that the computation of this warm up process is very low and is equal to $12400$ gradient evaluations. This number of samples is less than $10\%$ of the full training set. In other words, the cost is less than $10\%$ of one pass over the dataset. Although, this cost is negligible, we consider it in comparison with SGD, SAGA, and Newton's method. We would like to mention that other algorithms such as Newton's method and stochastic algorithms can also be used for the warm up process; however, the gradient descent method sounds the best option since the gradient evaluation is not costly and the problem is well-conditioned for a small training set .

%%%%%%%%%%%%%%%%%%%%%%%%%%%%%%%%%%%
%%%%%%%%%%%%%%%%%%%%%%%%%%%%%%%%%%%
%%%%%    F  I  G  U  R  E    %%%%%%%%%%%%%%%%%%%%
%%%%%%%%%%%%%%%%%%%%%%%%%%%%%%%%%%%
%%%%%%%%%%%%%%%%%%%%%%%%%%%%%%%%%%%
 \begin{figure}[t]
	\centering
          \includegraphics[width=0.7\linewidth]{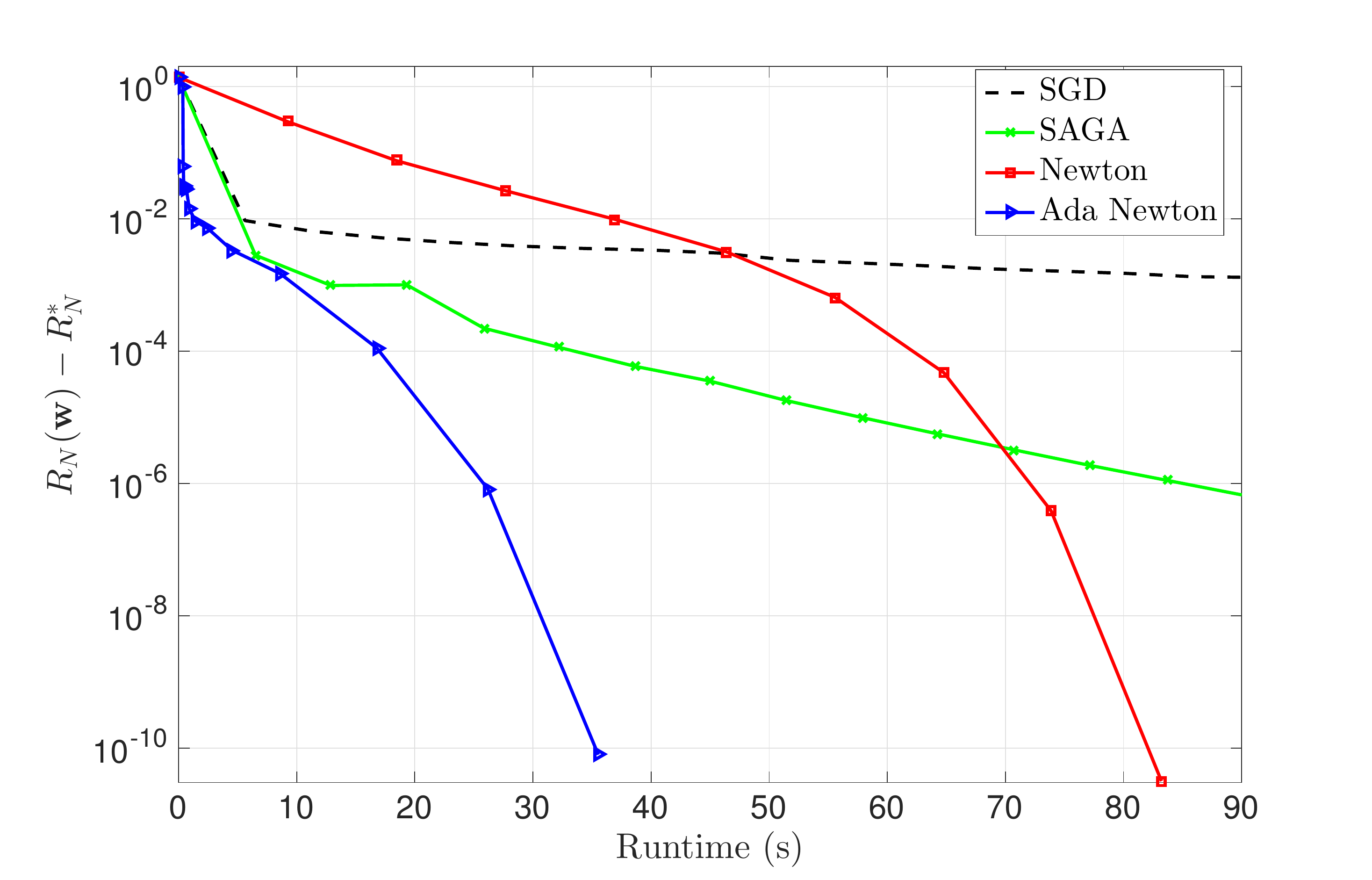} 
          \caption{Comparison of SGD, SAGA, Newton, and Ada Newton in terms of runtime for the protein homology dataset.}
          \label{fig2}
\end{figure}

Figure \ref{fig1} illustrates the convergence path of SGD, SAGA, Newton, and Ada Newton for the protein homology dataset. Note that the $x$ axis is the total number of samples used divided by the size of the training set $N=145751$ which we call number of passes over the dataset. As we observe, The best performance among the four algorithms belongs to Ada Newton. In particular, Ada Newton is able to achieve the accuracy of $R_N(\bbw)-R_n^*<1/N$ by $2.4$ passes over the dataset which is very close to theoretical result in Theorem 1 that guarantees accuracy of order $O(1/N)$ after $\alpha/(\alpha-1)=2$ passes over the dataset. To achieve the same accuracy of $1/N$ Newton's method requires $7.5$ passes over the dataset, while SAGA needs $10$ passes. The SGD algorithm can not achieve the statistical accuracy of order $O(1/N)$ even after $25$ passes over the dataset.

Although, Ada Newton and Newton outperform SAGA and SGD, their computational complexity are different. We address this concern by comparing the algorithms in terms of runtime. Figure \ref{fig2} demonstrates the convergence paths of the considered methods in terms of runtime. As we observe, Newton's method requires more time to achieve the statistical accuracy of $1/N$ relative to SAGA. This observation justifies the belief that Newton's method is not practical for large-scale optimization problems, since by enlarging $p$ or making the initial solution worse the performance of Newton's method will be even worse than the ones in Figure \ref{fig1}. Ada Newton resolves this issue by starting from small sample size which is computationally less costly. Ada Newton also requires Hessian inverse evaluations, but the number of inversions is proportional to $\log_\alpha N$. Moreover, the performance of Ada Newton doesn't depend on the initial point and the warm up process is not costly as we described before. We observe that Ada Newton outperforms SAGA significantly. In particular it achieves the statistical accuracy of $1/N$ in less than $25$ seconds, while SAGA achieves the same accuracy in $62$ seconds. Note that since the variable $\bbw_N$ is in the quadratic neighborhood of Newton's method for $R_N$ the convergence path of Ada Newton becomes quadratic eventually when the size of the training set becomes equal to the size of the full dataset. It follows that the advantage of Ada Newton with respect to SAGA is more significant if we look for a sub-optimality less than $V_n$. We have observed similar performances for other datasets such as A9A, COVTYPE, and SUSY.

\section{Discussions}

As explained in Section \ref{sec_convergence}, Theorem \ref{the_main_result_theorem} holds because condition \eqref{cond_1} makes $\bbw_m$ part of the quadratic convergence region of $R_n$. From this fact, it follows that the Newton iteration makes the suboptimality gap $R_{n}(\bbw_n)- R_{n}(\bbw_n^*)$ the square of the suboptimality gap $R_{n}(\bbw_m)- R_{n}(\bbw_n^*)$. This yields condition \eqref{cond_2} and is the fact that makes Newton steps valuable in increasing the sample size. If we replace Newton iterations by any method with linear convergence rate, the orders of both sides on condition \eqref{cond_2} are the same. This would make aggressive increase of the sample size unlikely.

In Section \ref{sec_intro} we pointed out four reasons that challenge the development of stochastic Newton methods. It would not be entirely accurate to call Ada Newton a stochastic method because it doesn't rely on stochastic descent directions. It is, nonetheless, a method for ERM that makes pithy use of the dataset. The challenges listed in Section \ref{sec_intro} are overcome by Ada Newton because:

\begin{itemize}
\item[\bf(i)] Ada Newton does not use line searches. Optimality improvement is guaranteed by increasing the sample size.

\item[\bf(ii)] The advantages of Newton's method are exploited by increasing the sample size at a rate that keeps the solution for sample size $m$ in the quadratic convergence region of the risk associated with sample size $n = \alpha m$. This allows aggressive growth of the sample size.

\item[\bf(iii)] The ERM problem is not necessarily strongly convex. A regularization of order $V_n$ is added to construct the empirical risk $R_n$

\item[\bf(iv)] Ada Newton inverts only $\log_\alpha N$ Hessians. 
\end{itemize}

It is fair to point out that items (ii) and (iv) are true only to the extent that the damped phase in Algorithm \ref{alg:AdaNewton} is not significant. Our numerical experiments indicate that this is true but the conclusion is not warranted by out theoretical bounds except when the dataset is very large. This suggests the bounds are loose and that further research is warranted to develop tighter bounds.

\section*{Acknowledgement}

We thank Hadi Daneshmand, Aurelien Lucci, and Thomas Hofmann for useful discussions on the use of adaptive sample sizes for solving large-scale ERM problems and on the importance of using adaptive regularization coefficients.

{\small{
\bibliography{bibliography,}

\begin{thebibliography}{10}

\bibitem{bartlett2006convexity}
Peter~L Bartlett, Michael~I Jordan, and Jon~D McAuliffe.
\newblock Convexity, classification, and risk bounds.
\newblock {\em Journal of the American Statistical Association},
  101(473):138--156, 2006.

\bibitem{beck2009fast}
Amir Beck and Marc Teboulle.
\newblock A fast iterative shrinkage-thresholding algorithm for linear inverse
  problems.
\newblock {\em SIAM journal on imaging sciences}, 2(1):183--202, 2009.

\bibitem{bordes2009sgd}
Antoine Bordes, L{\'e}on Bottou, and Patrick Gallinari.
\newblock Sgd-qn: Careful quasi-newton stochastic gradient descent.
\newblock {\em The Journal of Machine Learning Research}, 10:1737--1754, 2009.

\bibitem{bousquet2008tradeoffs}
Olivier Bousquet and L{\'e}on Bottou.
\newblock The tradeoffs of large scale learning.
\newblock In {\em Advances in Neural Information Processing Systems}, pages
  161--168, 2008.

\bibitem{boyd04}
Stephen Boyd and Lieven Vandenberghe.
\newblock {\em Convex Optimization}.
\newblock Cambridge University Press, New York, NY, USA, 2004.

\bibitem{daneshmand2016starting}
Hadi Daneshmand, Aurelien Lucchi, and Thomas Hofmann.
\newblock Starting small--learning with adaptive sample sizes.
\newblock {\em arXiv preprint arXiv:1603.02839}, 2016.

\bibitem{defazio2014saga}
Aaron Defazio, Francis Bach, and Simon Lacoste-Julien.
\newblock Saga: A fast incremental gradient method with support for
  non-strongly convex composite objectives.
\newblock In {\em Advances in Neural Information Processing Systems}, pages
  1646--1654, 2014.

\bibitem{erdogdu2015convergence}
Murat~A Erdogdu and Andrea Montanari.
\newblock Convergence rates of sub-sampled newton methods.
\newblock In {\em Advances in Neural Information Processing Systems}, pages
  3034--3042, 2015.

\bibitem{frostig2014competing}
Roy Frostig, Rong Ge, Sham~M Kakade, and Aaron Sidford.
\newblock Competing with the empirical risk minimizer in a single pass.
\newblock {\em arXiv preprint arXiv:1412.6606}, 2014.

\bibitem{gower2016stochastic}
Robert~M Gower, Donald Goldfarb, and Peter Richt{\'a}rik.
\newblock Stochastic block bfgs: Squeezing more curvature out of data.
\newblock {\em arXiv preprint arXiv:1603.09649}, 2016.

\bibitem{gurbuzbalaban2015globally}
Mert G{\"u}rb{\"u}zbalaban, Asuman Ozdaglar, and Pablo Parrilo.
\newblock A globally convergent incremental newton method.
\newblock {\em Mathematical Programming}, 151(1):283--313, 2015.

\bibitem{johnson2013accelerating}
Rie Johnson and Tong Zhang.
\newblock Accelerating stochastic gradient descent using predictive variance
  reduction.
\newblock In {\em Advances in Neural Information Processing Systems}, pages
  315--323, 2013.

\bibitem{konevcny2013semi}
Jakub Kone{\v{c}}n{\`y} and Peter Richt{\'a}rik.
\newblock Semi-stochastic gradient descent methods.
\newblock {\em arXiv preprint arXiv:1312.1666}, 2013.

\bibitem{mokhtari2014res}
Aryan Mokhtari and Alejandro Ribeiro.
\newblock Res: Regularized stochastic bfgs algorithm.
\newblock {\em Signal Processing, IEEE Transactions on}, 62(23):6089--6104,
  2014.

\bibitem{JMLR:v16:mokhtari15a}
Aryan Mokhtari and Alejandro Ribeiro.
\newblock Global convergence of online limited memory bfgs.
\newblock {\em Journal of Machine Learning Research}, 16:3151--3181, 2015.

\bibitem{moritz2015linearly}
Philipp Moritz, Robert Nishihara, and Michael~I Jordan.
\newblock A linearly-convergent stochastic l-bfgs algorithm.
\newblock {\em arXiv preprint arXiv:1508.02087}, 2015.

\bibitem{nesterov1998introductory}
Yu~Nesterov.
\newblock Introductory lectures on convex programming volume i: Basic course.
\newblock 1998.

\bibitem{nesterov2007gradient}
Yurii Nesterov et~al.
\newblock Gradient methods for minimizing composite objective function.
\newblock Technical report, UCL, 2007.

\bibitem{polyak1992acceleration}
Boris~T Polyak and Anatoli~B Juditsky.
\newblock Acceleration of stochastic approximation by averaging.
\newblock {\em SIAM Journal on Control and Optimization}, 30(4):838--855, 1992.

\bibitem{robbins1951stochastic}
Herbert Robbins and Sutton Monro.
\newblock A stochastic approximation method.
\newblock {\em The Annals of Mathematical Statistics}, pages 400--407, 1951.

\bibitem{roux2012stochastic}
Nicolas~L Roux, Mark Schmidt, and Francis~R Bach.
\newblock A stochastic gradient method with an exponential convergence rate for
  finite training sets.
\newblock In {\em Advances in Neural Information Processing Systems}, pages
  2663--2671, 2012.

\bibitem{schraudolph2007stochastic}
Nicol~N Schraudolph, Jin Yu, and Simon G{\"u}nter.
\newblock A stochastic quasi-newton method for online convex optimization.
\newblock In {\em International Conference on Artificial Intelligence and
  Statistics}, pages 436--443, 2007.

\bibitem{shalev2010learnability}
Shai Shalev-Shwartz, Ohad Shamir, Nathan Srebro, and Karthik Sridharan.
\newblock Learnability, stability and uniform convergence.
\newblock {\em The Journal of Machine Learning Research}, 11:2635--2670, 2010.

\bibitem{shalev2013stochastic}
Shai Shalev-Shwartz and Tong Zhang.
\newblock Stochastic dual coordinate ascent methods for regularized loss.
\newblock {\em The Journal of Machine Learning Research}, 14:567--599, 2013.

\bibitem{shalev2016accelerated}
Shai Shalev-Shwartz and Tong Zhang.
\newblock Accelerated proximal stochastic dual coordinate ascent for
  regularized loss minimization.
\newblock {\em Mathematical Programming}, 155(1-2):105--145, 2016.

\bibitem{vapnik1998statistical}
Vladimir Vapnik.
\newblock {\em The nature of statistical learning theory}.
\newblock Springer Science \& Business Media, 2013.

\bibitem{xiao2014proximal}
Lin Xiao and Tong Zhang.
\newblock A proximal stochastic gradient method with progressive variance
  reduction.
\newblock {\em SIAM Journal on Optimization}, 24(4):2057--2075, 2014.

\bibitem{zhang2013linear}
Lijun Zhang, Mehrdad Mahdavi, and Rong Jin.
\newblock Linear convergence with condition number independent access of full
  gradients.
\newblock In {\em Advances in Neural Information Processing Systems}, pages
  980--988, 2013.

\end{thebibliography}
\bibliographystyle{plain}
}}

%!TEX root = adap-ss-nips.tex

\section{Appendix}

In this section we study the proofs of Propositions \ref{main_prop} and  \ref{main_theorem_444}. To do so, first we prove Lemmata \ref{lemma_1} and \ref{lemma_2} which are intermediate results that we use in proving the mentioned propositions. 

We start the analysis by providing an upper bound for the difference between the loss functions $L_n$ and $L_m$. The upper bound is studied in the following lemma which uses the condition in \eqref{eqn_loss_minus_erm}.

%%%%%%%%%%%%%%%%%%%%%%%%%%%%%%%%%%%%
%%%%%%%%%%%%%%%%%%%%%%%%%%%%%%%%%%%%
%%%%    L   E   M   M   A    %%%%%%%%%%%%%%%%%%%%%%
%%%%%%%%%%%%%%%%%%%%%%%%%%%%%%%%%%%%
%%%%%%%%%%%%%%%%%%%%%%%%%%%%%%%%%%%%
\begin{lemma}\label{lemma_1}
Consider $L_n$ and $L_m$ as the empirical losses of the sets $\S_n$ and $\S_m$, respectively, where they are chosen such that $\S_m\subset \S_n$. If we define $n$ and $m$ as the number of samples in the training sets $\S_n$ and $\S_m$, respectively, then the absolute value of the difference between the empirical losses is bounded above by
%%%% 
\begin{align}\label{proof_eq_44}
|L_n(\bbw)-L_m(\bbw)| \leq \frac{n-m}{n} \left(V_{n-m}+V_m\right),\qquad \whp.
\end{align}
%%%%
for any $\bbw$.
\end{lemma}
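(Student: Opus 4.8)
The plan is to exploit the nested structure $\S_m \subset \S_n$ by splitting the sum defining $L_n$ into the part coming from $\S_m$ and the part coming from the complement $\S_n \setminus \S_m$, which contains exactly $n-m$ samples. Writing $L_{n-m}$ for the empirical loss over this complementary set of $n-m$ samples, the decomposition
\begin{equation}\label{plan_decomp}
L_n(\bbw) = \frac{1}{n}\sum_{k=1}^{n} f(\bbw,z_k)
          = \frac{m}{n}\,L_m(\bbw) + \frac{n-m}{n}\,L_{n-m}(\bbw)
\end{equation}
holds identically for every $\bbw$, since $\sum_{k=1}^{m} f(\bbw,z_k)=m\,L_m(\bbw)$ and $\sum_{k=m+1}^{n} f(\bbw,z_k)=(n-m)\,L_{n-m}(\bbw)$.

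First I would subtract $L_m(\bbw)$ from both sides of \eqref{plan_decomp}. The coefficient of $L_m$ becomes $(m/n)-1 = -(n-m)/n$, so the convex-combination structure collapses cleanly into
\begin{equation}\label{plan_diff}
L_n(\bbw) - L_m(\bbw) = \frac{n-m}{n}\bigl(L_{n-m}(\bbw) - L_m(\bbw)\bigr).
\end{equation}
This is the crucial reduction: the prefactor $(n-m)/n$ is now explicit, and the remaining work is to bound the single difference $|L_{n-m}(\bbw)-L_m(\bbw)|$ uniformly in $\bbw$.

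Next I would bound that difference by inserting the statistical loss $L(\bbw)$ and applying the triangle inequality, $|L_{n-m}(\bbw)-L_m(\bbw)| \leq |L_{n-m}(\bbw)-L(\bbw)| + |L(\bbw)-L_m(\bbw)|$. Each term is controlled by the defining bound \eqref{eqn_loss_minus_erm}: applied to the $(n-m)$-sample set it gives $\sup_{\bbw}|L(\bbw)-L_{n-m}(\bbw)|\leq V_{n-m}$, and applied to the $m$-sample set it gives $\sup_{\bbw}|L(\bbw)-L_m(\bbw)|\leq V_m$. Combining these with \eqref{plan_diff} yields the claim $|L_n(\bbw)-L_m(\bbw)| \leq \frac{n-m}{n}(V_{n-m}+V_m)$.

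The only genuine subtlety is the high-probability qualifier: the bound \eqref{eqn_loss_minus_erm} is invoked twice, once for the sample set of size $m$ and once for the disjoint set of size $n-m$, so strictly the conclusion holds on the intersection of two high-probability events (a union bound degrades the confidence from $1-\delta$ to $1-2\delta$). I expect the cleanest presentation to simply keep this dependence implicit, reporting the result as holding $\whp$ consistently with how the paper suppresses the $\delta$-dependence of the $V_n$ constants throughout. Everything else is an exact algebraic identity plus two applications of a stated hypothesis, so there is no real analytic obstacle.
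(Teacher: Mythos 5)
Your proposal is correct and follows essentially the same route as the paper's own proof: the identity $L_n = (m/n)L_m + ((n-m)/n)L_{n-m}$ is exactly the paper's factoring of $(n-m)/n$ out of the split sum, followed by the same insertion of $L(\bbw)$, triangle inequality, and two applications of \eqref{eqn_loss_minus_erm}. Your explicit remark that the high-probability guarantee degrades to $1-2\delta$ via a union bound is a point the paper leaves implicit, but it changes nothing of substance.
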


%%%%%%%%%%%%%%%%%%%%%%%%%%%%%%%%%%%%
%%%%%%%%%%%%%%%%%%%%%%%%%%%%%%%%%%%%
%%%%    P   R   O  O   F    %%%%%%%%%%%%%%%%%%%%%%
%%%%%%%%%%%%%%%%%%%%%%%%%%%%%%%%%%%%
%%%%%%%%%%%%%%%%%%%%%%%%%%%%%%%%%%%%
	\begin{proof}
First we characterize the difference between the difference of the loss functions associated with the sets $\S_m$ and $\S_n$. To do so, consider the difference 
\begin{equation}\label{proof_eq_11}
L_n(\bbw)-L_m(\bbw)= \frac{1}{n} \sum_{i\in\S_n} f_i(\bbw)-\frac{1}{m} \sum_{i\in\S_m} f_i(\bbw).
\end{equation}
Notice that the set $\S_m$ is a subset of the set $\S_n$ and we can write $\S_n=\S_m\cup\S_{n-m}$. Thus, we can rewrite the right hand side of \eqref{proof_eq_11} as 
\begin{align}\label{proof_eq_22}
L_n(\bbw)-L_m(\bbw)
&= \frac{1}{n} \left[\sum_{i\in\S_m} f_i(\bbw)+\sum_{i\in\S_{n-m}} f_i(\bbw)\right]-\frac{1}{m} \sum_{i\in\S_m} f_i(\bbw)\nonumber\\
& =  \frac{1}{n} \sum_{i\in\S_{n-m}} f_i(\bbw) - \frac{n-m}{mn}\sum_{i\in\S_m} f_i(\bbw).
\end{align}
Factoring $(n-m)/n$ from the terms in the right hand side of \eqref{proof_eq_22} follows
\begin{align}\label{proof_eq_33}
L_n(\bbw)-L_m(\bbw)= \frac{n-m}{n}\left[ \frac{1}{n-m} \sum_{i\in\S_{n-m}} f_i(\bbw) - \frac{1}{m}\sum_{i\in\S_m} f_i(\bbw)\right]
\end{align}
Now add and subtract the statistical loss $L(\bbw)$ to obtain
\begin{align}\label{proof_eq_44}
|L_n(\bbw)-L_m(\bbw)| 
&= \frac{n-m}{n}\left|\frac{1}{n-m} \sum_{i\in\S_{n-m}} f_i(\bbw) -L(\bbw)+L(\bbw)- \frac{1}{m}\sum_{i\in\S_m} f_i(\bbw)\right|\nonumber\\
&\leq \frac{n-m}{n} \left(V_{n-m}+V_m\right).
\end{align}
where the last inequality follows by using the triangle inequality and the upper bound in \eqref{eqn_loss_minus_erm}. 
\end{proof}

%%%%%%%%%%%%%%%%%%%%%%%%%%%%%%%%%%%%
%%%%%%%%%%%%%%%%%%%%%%%%%%%%%%%%%%%%
%%%%   M  A  I  N      M  A  T  T  E  R   %%%%%%%%%%%%%%%%
%%%%%%%%%%%%%%%%%%%%%%%%%%%%%%%%%%%%
%%%%%%%%%%%%%%%%%%%%%%%%%%%%%%%%%%%%
The result in Lemma \ref{lemma_1} shows that the upper bound for the difference between the loss functions associated with the sets $\S_m$ and $\S_n$ where $\S_m\subset \S_n$ is proportional to the difference between the size of these two sets $n-m$. This result will help us later to understand how much we can increase the size of the training set at each iteration. In other words, how large the difference $n-m$ could be, while we have the statistical accuracy. 

In the following lemma, we characterize an upper bound for the norm of the optimal argument $\bbw_n^*$ of the empirical risk $R_n(\bbw)$ in terms of the norm of statistical average loss $L(\bbw)$ optimal argument $\bbw^*$.

%%%%%%%%%%%%%%%%%%%%%%%%%%%%%%%%%%%%
%%%%%%%%%%%%%%%%%%%%%%%%%%%%%%%%%%%%
%%%%    L   E   M   M   A    %%%%%%%%%%%%%%%%%%%%%%
%%%%%%%%%%%%%%%%%%%%%%%%%%%%%%%%%%%%
%%%%%%%%%%%%%%%%%%%%%%%%%%%%%%%%%%%%
\begin{lemma}\label{lemma_2}
Consider $L_n$ as the empirical loss of the set $\S_n$ and $L$ as the statistical average loss. Moreover, recall $\bbw^*$ as the optimal argument of the statistical average loss $L$, i.e., $\bbw^*=\argmin_{\bbw} L(\bbw)$. If Assumption \ref{convexity_lip_assumption} holds, then the norm of the optimal argument $\bbw_n^*$ of the regularized empirical risk $R_n(\bbw):=L_n(\bbw)+cV_n\|\bbw\|^2$ is bounded above by 
%%%% 
\begin{align}\label{claim_lemma_2}
\|\bbw_n^*\|^2 \leq \frac{4}{c}+\|\bbw^*\|^2,  \qquad \whp.
\end{align}
%%%%
\end{lemma}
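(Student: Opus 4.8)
The plan is to exploit the optimality of $\bbw_n^*$ for the regularized risk $R_n$ together with the uniform deviation bound \eqref{eqn_loss_minus_erm} and the \emph{global} optimality of $\bbw^*$ for the statistical loss $L$. First I would use that $\bbw_n^*=\argmin_{\bbw} R_n(\bbw)$ and evaluate the defining inequality $R_n(\bbw_n^*)\leq R_n(\bbw)$ at the particular competitor $\bbw=\bbw^*$. Writing out the regularizer $(cV_n/2)\|\bbw\|^2$ as in \eqref{eqn_empirical_loss_regularized} and rearranging, this yields
\begin{equation}
\frac{cV_n}{2}\left(\|\bbw_n^*\|^2-\|\bbw^*\|^2\right)\leq L_n(\bbw^*)-L_n(\bbw_n^*).
\end{equation}
Thus the whole problem reduces to bounding the right-hand side, i.e.\ to controlling how much smaller $L_n$ can be at $\bbw_n^*$ than at $\bbw^*$.

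Second, I would bound $L_n(\bbw^*)-L_n(\bbw_n^*)$ by $2V_n$ using two applications of \eqref{eqn_loss_minus_erm}. On the one hand it gives $L_n(\bbw^*)\leq L(\bbw^*)+V_n$ with high probability. On the other hand, applied at $\bbw_n^*$ it gives $L_n(\bbw_n^*)\geq L(\bbw_n^*)-V_n$, and since $\bbw^*$ is the global minimizer of $L$ we have $L(\bbw_n^*)\geq L(\bbw^*)$; hence $L_n(\bbw_n^*)\geq L(\bbw^*)-V_n$. Subtracting the second chain from the first cancels the (unknown) value $L(\bbw^*)$ and leaves $L_n(\bbw^*)-L_n(\bbw_n^*)\leq 2V_n$, valid $\whp$.

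Finally, I would combine the two displays: substituting the $2V_n$ bound into the first inequality and dividing through by $cV_n/2$ cancels the $V_n$ factors and produces $\|\bbw_n^*\|^2-\|\bbw^*\|^2\leq 4/c$, which is exactly the claim \eqref{claim_lemma_2}.

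The argument is short and there is no serious obstacle; the one step that needs genuine care is the middle bound, where it is essential to use the global optimality of $\bbw^*$ for $L$ (not merely convexity of $L_n$) so that $L(\bbw_n^*)\geq L(\bbw^*)$ and the unknown optimal loss value cancels cleanly. I would also keep explicit track of the high-probability qualifier: since \eqref{eqn_loss_minus_erm} is a uniform-in-$\bbw$ bound, invoking it at the two points $\bbw^*$ and $\bbw_n^*$ incurs no additional failure probability, so the conclusion inherits the same $\whp$ guarantee without any union bound.
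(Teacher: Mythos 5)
Your proposal is correct and follows essentially the same argument as the paper's proof of Lemma \ref{lemma_2}: comparing $R_n$ at $\bbw_n^*$ and $\bbw^*$, then bounding $L_n(\bbw^*)-L_n(\bbw_n^*)\leq 2V_n$ by inserting $L(\bbw^*)$ and $L(\bbw_n^*)$ and using the global optimality of $\bbw^*$ together with the uniform bound \eqref{eqn_loss_minus_erm}. Your closing remarks on the uniformity of \eqref{eqn_loss_minus_erm} avoiding a union bound are a nice touch but do not change the substance.
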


%%%%%%%%%%%%%%%%%%%%%%%%%%%%%%%%%%%%
%%%%%%%%%%%%%%%%%%%%%%%%%%%%%%%%%%%%
%%%%    P   R   O  O   F    %%%%%%%%%%%%%%%%%%%%%%
%%%%%%%%%%%%%%%%%%%%%%%%%%%%%%%%%%%%
%%%%%%%%%%%%%%%%%%%%%%%%%%%%%%%%%%%%
	\begin{proof}
The optimality condition of $\bbw_n^*$ for the the regularized empirical risk $R_n(\bbw)=L_n(\bbw)+ (cV_n)/2\|\bbw\|^2$ implies that 
\begin{equation}\label{proof_lemma_2_eq_11}
L_n(\bbw_n^*)+ \frac{cV_n}{2}\|\bbw_n^*\|^2
\leq   L_n(\bbw^*)+ \frac{cV_n}{2}\|\bbw^*\|^2.
\end{equation}
By regrouping the terms we obtain that the squared norm $\|\bbw_n^*\|^2$ is bonded above by
\begin{equation}\label{proof_lemma_2_eq_22}
\|\bbw_n^*\|^2
\leq \frac{2}{cV_n}  \left(L_n(\bbw^*)-L_n(\bbw_n^*)\right)+ \|\bbw^*\|^2.
\end{equation}
We proceed to bound the difference $L_n(\bbw^*)-L_n(\bbw_n^*)$. By adding and subtracting the terms $L(\bbw^*)$ and $L(\bbw_n^*)$ we obtain that 
\begin{equation}\label{proof_lemma_2_eq_33}
L_n(\bbw^*)-L_n(\bbw_n^*) = \big[L_n(\bbw^*)-L(\bbw^*)\big]
+\big[L(\bbw^*)-L(\bbw_n^*)\big]
+\big[L(\bbw_n^*)-L_n(\bbw_n^*)\big].
\end{equation}
Notice that the second bracket in \eqref{proof_lemma_2_eq_33} is non-positive  since $L(\bbw^*)\leq L(\bbw_n^*)$. Therefore, it is bounded by $0$. According to \eqref{eqn_loss_minus_erm}, the first and third brackets in \eqref{proof_lemma_2_eq_33} are with high probability bounded above by $V_n$. Replacing these upper bounds by the brackets in \eqref{proof_lemma_2_eq_33} yields 
\begin{equation}\label{proof_lemma_2_eq_44}
L_n(\bbw^*)-L_n(\bbw_n^*) \leq  2V_n.
\end{equation}
Substituting the upper bound in \eqref{proof_lemma_2_eq_44} into \eqref{proof_lemma_2_eq_22} follows the claim in \eqref{claim_lemma_2}.
\end{proof}

%%%%%%%%%%%%%%%%%%%%%%%%%%%%%%%%%%%%
%%%%%%%%%%%%%%%%%%%%%%%%%%%%%%%%%%%%
%%%%    P   R   O  O   F    %%%%%%%%%%%%%%%%%%%%%%
%%%%%%%%%%%%%%%%%%%%%%%%%%%%%%%%%%%%
%%%%%%%%%%%%%%%%%%%%%%%%%%%%%%%%%%%%
\subsection{Proof of Proposition \ref{main_prop}}
From the self-concordance analysis of Newton's method we know that the variable $\bbw_m$ is in the neighborhood that Newton's method has a quadratic convergence rate if $\lambda_n (\bbw_m)\leq 1/4$; see e.g., Chapter 9 of \cite{boyd04}. We proceed to come up with a condition for the quadratic convergence phase which guarantees that $ \lambda_n(\bbw_m)<1/4$ and $\bbw_m$ is in the local neighborhood of the optimal argument of $R_n$. Recall that we have a $\bbw_m$ which has sub-optimality $V_m$ for $R_m$.  We then proceed to enlarge the sample size to $n$ and start from the observation that we can bound $\lambda_n(\bbw_m)$ as
\begin{align}\label{proof_prop_new_10}
   \lambda_n(\bbw_m) =    \|\nabla R_{n}(\bbw_m)\|_{\bbH_n^{-1}} 
                     \leq \|\nabla R_{m}(\bbw_m)\|_{\bbH_n^{-1}}  
                            + \|\nabla R_{n}(\bbw_m) - \nabla R_{m}(\bbw_m) \|_{\bbH_n^{-1}} ,
\end{align}
where we have used the definition $\bbH_n= \nabla^2 R_{n}(\bbw_m) $. Note that the weighted norm $\|\bba\|_\bbA$ for vector $\bba$ and matrix $\bbA$ is equal to $\|\bba\|_\bbA=(\bba^T\bbA\bba)^{1/2}$. First, we bound the norm $   \|\nabla R_{n}(\bbw_m)\|_{\bbH_n^{-1}} $ in \eqref{proof_prop_new_10}. Notice that the Hessian $\nabla^2 R_{n}(\bbw_m)$ can be written as $\nabla^2 L_{n}(\bbw_m) +cV_n\bbI$. Thus, the eigenvalues of the Hessian $\bbH_n=\nabla^2 R_{n}(\bbw_m) $ are bounded below by $cV_n$ and consequently the eigenvalues of the Hessian inverse $\bbH_n^{-1}=\nabla^2 R_{n}(\bbw_m) ^{-1}$ are upper bounded by $1/(cV_n)$. This bound implies that $\|\bbH_n^{-1}\|\leq 1/(cV_n)$. Moreover, from Theorem 2.1.5 of \cite{nesterov1998introductory}, we know that the Lipschitz continuity of the gradients $\nabla R_{m}(\bbw)$ with constant $M+cV_m$ implies that 
%%%
\begin{equation}\label{proof_prop_new_20}
{\|\nabla R_{m}(\bbw_m)\|^2\leq 2 (M+cV_m) ( R_{m}(\bbw_m) - R_{m}(\bbw_m^*))}
\leq 2 (M+cV_m) V_m,
\end{equation}
where the last inequality holds comes from the condition that $R_{m}(\bbw_m) - R_{m}(\bbw_m^*)\leq V_m$.
Considering the upper bound for $\|\nabla R_{m}(\bbw_m)\|^2$ in \eqref{proof_prop_new_20} and the inequality $\|\nabla^2 R_{n}(\bbw_m) ^{-1}\|\leq 1/(cV_n)$ we can write 
\begin{align}\label{proof_prop_new_30}
  \|\nabla R_{m}(\bbw_m)\|_{\bbH_n^{-1}} 
      =    \Big[\nabla R_{m}(\bbw_m)^T\bbH_n^{-1}\nabla R_{m}(\bbw_m)\Big]^{1/2}
      \leq \left(\frac{2(M+cV_m)V_{m}}{cV_n}\right)^{1/2}.
\end{align}

Now we proceed to bound the second the term in \eqref{proof_prop_new_10}. The definition of the risk function the gradient can be written as $\nabla R_n(\bbw)=\nabla L_n(\bbw)+(cV_n)\bbw$. Thus, we can derive an upper bound for the difference $ \| \nabla R_{n}(\bbw_m) - \nabla R_{m}(\bbw_m)\|$ as
\begin{align}\label{proof_prop_new_40}
  & \| \nabla R_{n}(\bbw_m) - \nabla R_{m}(\bbw_m)\|\nonumber\\
       &\qquad \leq    \|\nabla L_n(\bbw_m) - \nabla L_m(\bbw_m)\|
                + c(V_m-V_n)\|\bbw_m\|  \nonumber \\          
       &\qquad \leq   \|\nabla L_n(\bbw_m) - \nabla L_m(\bbw_m)\|
                + c(V_m-V_n)\|\bbw_m-\bbw_m^*\|+ c(V_m-V_n)\|\bbw_m^*\|,
\end{align}
where in the second inequality we have used the triangle inequality and replaced $\|\bbw_m\| $ by its upper bound $\|\bbw_m-\bbw_m^*\|+\|\bbw_m^*\|$. By following the steps in \eqref{proof_eq_11}-\eqref{proof_eq_44} we can show that the difference $\|\nabla L_n(\bbw_m) - \nabla L_m(\bbw_m)\|$ is bounded above by 
%%%
\begin{align}\label{proof_prop_new_50}
\|\nabla L_n(\bbw)-\nabla L_m(\bbw)\| 
&\leq \frac{n-m}{n}\left\| \nabla L_{n-m}(\bbw)-\nabla L(\bbw) \right\| + \frac{n-m}{n} \left\|\nabla L_m(\bbw) -\nabla L(\bbw)  \right\|\nonumber\\
&\leq \frac{2(n-m)}{n} V_n^{1/2},
\end{align}
where the second inequality uses the condition that $\left\|\nabla L_m(\bbw) -\nabla L(\bbw)  \right\|\leq V_m^{1/2}$. Note that the strong convexity of the risk $R_{m}$ with parameter $cV_m$ yields 
\begin{align}\label{proof_prop_new_60}
\|\bbw_m-\bbw_m^*\|^2 \leq \frac{2}{cV_m} (R_{m}(\bbw_m) - R_{m}(\bbw_m^*))
\leq \frac{2}{c}.
\end{align}
Thus, by considering the inequalities in \eqref{proof_prop_new_50} and \eqref{proof_prop_new_60} we can show that upper bound in \eqref{proof_prop_new_40} can be replaced by 
\begin{align}\label{proof_prop_new_70}
   \| \nabla R_{n}(\bbw_m) - \nabla R_{m}(\bbw_m)\|         
       \leq \frac{2(n-m)}{n} V_n^{1/2} +(\sqrt{2c}+c\|\bbw_m^*\|)(V_m-V_n) .
\end{align}
Substituting the upper bounds in \eqref{proof_prop_new_30} and \eqref{proof_prop_new_70} for the first and second summands in \eqref{proof_prop_new_10}, respectively, follows the inequality 
\begin{align}\label{proof_prop_new_80}
   \lambda_n(\bbw_m) \leq 
   \left(\frac{2(M+cV_m)V_{m}}{cV_n}\right)^{1/2}
                            + \frac{{(2(n-m)}/{n}) V_n^{1/2} +(\sqrt{2c}+c\|\bbw_m^*\|)(V_m-V_n)}{(cV_n)^{1/2}}.
\end{align}
Note that the result in \eqref{claim_lemma_2} shows that $\|\bbw_m^*\|^2 \leq ({4}/{c})+\|\bbw^*\|^2$ with high probability. This observation follows that $\|\bbw_m^*\|$ is bounded above by $(2/\sqrt{c})+\|\bbw^*\|$. Replacing the norm $\|\bbw_m^*\|$ in \eqref{proof_prop_new_80} by the upper bound $ ({2}/{\sqrt{c}})+\|\bbw^*\|$ follows
\begin{align}\label{proof_prop_new_90}
   \lambda_n(\bbw_m) \leq 
   \left(\frac{2(M+cV_m)V_{m}}{cV_n}\right)^{1/2}
                            + \frac{{(2(n-m)}/{n}) V_n^{1/2} +(\sqrt{2c}+2\sqrt{c}+c\|\bbw^*\|)(V_m-V_n)}{(cV_n)^{1/2}}.
\end{align}
As we mentioned previously, the variable $\bbw_m$ is in the neighborhood that Newton's method has a quadratic convergence rate for the function $R_n$ if the condition $\lambda_n (\bbw_m)\leq 1/4$ holds. Hence, if the right hand side of \eqref{proof_prop_new_90} is bounded above by $1/4$ we can conclude that $\bbw_m$ is in the local neighborhood and the proof is complete.

%%%%%%%%%%%%%%%%%%%%%%%%%%%%%%%%%%%%
%%%%%%%%%%%%%%%%%%%%%%%%%%%%%%%%%%%%
%%%%    P   R   O  O   F    %%%%%%%%%%%%%%%%%%%%%%
%%%%%%%%%%%%%%%%%%%%%%%%%%%%%%%%%%%%
%%%%%%%%%%%%%%%%%%%%%%%%%%%%%%%%%%%%
\subsection{Proof of Proposition \ref{main_theorem_444}}
Note that the difference $ R_{n}(\bbw_m) - R_{n}(\bbw_n^*)$ can be written as
\begin{align}\label{proof_eq_2_11}
 R_{n}(\bbw_m) - R_n(\bbw_n^*)
 &=  R_{n}(\bbw_m) -  R_{m}(\bbw_m) +  R_{m}(\bbw_m) - R_{m}(\bbw_m^*) \nonumber\\
 &\qquad+ R_{m}(\bbw_m^*) -R_{m}(\bbw_n^*) +R_{m}(\bbw_n^*) - R_{n}(\bbw_n^*).
\end{align}
We proceed to bound the differences in \eqref{proof_eq_2_11}. To do so, note that the difference $R_{n}(\bbw_m) -  R_{m}(\bbw_m)$ can be simplified as 
\begin{align}\label{proof_eq_2_22}
R_{n}(\bbw_m) -  R_{m}(\bbw_m)
&=L_n(\bbw_m)-L_m(\bbw_m)+\frac{c{(V_{n}-V_m)}}{2}\|\bbw_m\|^2
\nonumber\\
&\leq  L_n(\bbw)-L_m(\bbw),
\end{align}
%%%
where the inequality follows from the fact that $V_{n}<V_m$ and $V_{n}-V_m$ is negative.
It follows from the result in Lemma \ref{lemma_1} that the right hand side of \eqref{proof_eq_2_22} is bounded by $ ({n-m})/{n} \left(V_{n-m}+V_m\right)$. Therefore,
\begin{equation}\label{proof_eq_2_44}
R_{n}(\bbw_m) -  R_{m}(\bbw_m)  \leq   \frac{n-m}{n} \left(V_{n-m}+V_m\right).
\end{equation}
%%%
According to the fact that $\bbw_m$ as an $V_m$ optimal solution for the sub-optimality $ R_{m}(\bbw_m) - R_{m}(\bbw_m^*) $ we know that 
\begin{equation}\label{proof_eq_2_55}
 R_{m}(\bbw_m) - R_{m}(\bbw_m^*)   \leq   V_m.
\end{equation}
%%%
Based on the definition of $\bbw_{m}^*$ which is the optimal solution of the risk $R_{m}$, the third difference in \eqref{proof_eq_2_11} which is $ R_{m}(\bbw_m^*) -R_{m}(\bbw_n^*) $ is always negative. I.e., 
\begin{equation}\label{proof_eq_2_66}
R_{m}(\bbw_m^*) -R_{m}(\bbw_n^*)    \leq  0.
\end{equation}
%%%
Moreover, we can use the triangle inequality to bound the difference $ R_{m}(\bbw_n^*) - R_{n}(\bbw_n^*)$ in \eqref{proof_eq_2_11} as
\begin{align}\label{proof_eq_2_77}
R_{m}(\bbw_n^*) - R_{n}(\bbw_n^*) 
& =  L_m(\bbw_n^*)-L_n(\bbw_n^*) + \frac{c(V_m-V_n)}{2} \|\bbw_n^*\|^2\nonumber\\
& \leq  \frac{n-m}{n} \left(V_{n-m}+V_m\right)+ \frac{c(V_m-V_n)}{2}\|\bbw_n^*\|^2.
\end{align}
%%%
Replacing the differences in \eqref{proof_eq_2_11} by the upper bounds in \eqref{proof_eq_2_44}-\eqref{proof_eq_2_77} follows 
\begin{equation}\label{proof_eq_2_88}
 R_{n}(\bbw_m) - R_n(\bbw_n^*) \leq V_m +  \frac{2(n-m)}{n} \left(V_{n-m}+V_m\right)+ \frac{c(V_m-V_n)}{2}\|\bbw_n^*\|^2
\quad \text{w.h.p.}
\end{equation}
Substitute $\|\bbw_n^*\|^2$ in \eqref{proof_eq_2_88} by the upper bound in \eqref{claim_lemma_2} to obtain the result in \eqref{theorem_result_444}.
%%%

%\newpage
%
%\red{
%
%The Newton step is 
%%
%\begin{equation}
%   \bbw_n = \bbw_m - \bbH_m^{-1} \bbg_m
%\end{equation}
%%
%If write the Hessian using the eigenvalue decomposition $\bbH_m = \bbV_m (\bbLam_m + cV_m\bbI) \bbV_m^H$ we can rewrite the Newton step as
%%
%\begin{equation}
%   \bbw_n = \bbw_m - \bbV_m\Big[\bbLam_m^{-1}+(cV_m)^{-1}\bbI\Big]\bbV_m^H \bbg_m ,
%\end{equation}
%%
%where I have supposed that the regularization affects only the zero eigenvalues. Which is wrong, I know.
%
%If we implement a change of coordinates in the eigenvector space $\bbV_m$ so that $\tbw=\bbV^H\bbx$ and $\tbg = \bbV^H\bbg$ we can further write
%%
%\begin{equation}
%   \tbw_n = \tbw_m - \Big[\bbLam_m^{-1}+(cV_m)^{-1}\bbI\Big] \tbg_m ,
%\end{equation}
%%
%In this coordinate system the update is decoupled and we see that we move a lot in the dimension that is close to singular and not so much in the other directions. Or not, because that would depend on the norm of the gradient.
%
%
%
%}

\end{document}